\DeclareMathOperator*{\argmin}{arg\,min}
\newtheorem{theorem}{Theorem}[section]
\newtheorem{lemma}[theorem]{Lemma}
\begin{document}
\newcommand{\workname}{DuaL} 
\title{Dual Adversarial Alignment for Realistic Support-Query Shift Few-shot Learning}

\author{Siyang Jiang,  Rui Fang, Hsi-Wen Chen, Wei Ding, and Ming-Syan Chen,~\IEEEmembership{Fellow,~IEEE}
\IEEEcompsocitemizethanks{\IEEEcompsocthanksitem S. Jiang is with the School of Mathematics and Statistics, Huizhou University, Huizhou, China. He was also with the Department
of Electrical and Engineering, National Taiwan University, Taipei, TW. \protect\\ 
E-mail: syjiang@hzu.edu.cn.
\IEEEcompsocthanksitem  R. Fnag, H.-W. Chen, W. Ding, M.-S. Chen are with the Department
of Electrical and Engineering, National Taiwan University, Taipei, TW.\protect\\ 
E-mail: \{rfang, hwchen, wding\}@arbor.ee.ntu.edu.tw,mschen@ntu.edu.tw.
}
\thanks{Manuscript received April 19, 2005; revised August 26, 2015.}}

\markboth{Journal of \LaTeX\ Class Files,~Vol.~14, No.~8, August~2015}%
{Shell \MakeLowercase{\textit{et al.}}: Bare Demo of IEEEtran.cls for Computer Society Journals}
\IEEEtitleabstractindextext{%
\begin{abstract}

Support-query shift few-shot learning aims to classify unseen examples (query set) to labeled data (support set) based on the learned embedding in a low-dimensional space under a distribution shift between the support set and the query set.  However, in real-world scenarios the shifts are usually unknown and varied, making it difficult to estimate in advance. Therefore, in this paper, we propose a novel but more difficult challenge, RSQS, focusing on Realistic Support-Query Shift few-shot learning. The key feature of RSQS is that the individual samples in a meta-task are subjected to \textit{multiple} distribution shifts in each meta-task. In addition, we propose a unified adversarial feature alignment method called \textit{\textbf{DU}al adversarial \textbf{AL}ignment framework (DuaL)} to relieve RSQS from two aspects, i.e., \textit{inter-domain bias} and \textit{intra-domain variance}. On the one hand, for the inter-domain bias, we corrupt the original data in advance and use the synthesized perturbed inputs to train the repairer network by minimizing distance in the feature level. On the other hand, for intra-domain variance,  we proposed a generator network to synthesize hard, i.e., less similar, examples from the support set in a self-supervised manner and introduce regularized optimal transportation to derive a smooth optimal transportation plan. Lastly, a benchmark of RSQS is built with several state-of-the-art baselines among three datasets (CIFAR100, mini-ImageNet, and Tiered-Imagenet). Experiment results show that \workname significantly outperforms the state-of-the-art methods in our benchmark.
\end{abstract}

\begin{IEEEkeywords}
Few-shot learning, Realistic Support-Query shift, Adversarial alignment, Optimal transportation
\end{IEEEkeywords}}

\maketitle
\IEEEdisplaynontitleabstractindextext
\IEEEpeerreviewmaketitle

\IEEEraisesectionheading{\section{Introduction}\label{sec:introduction}}

\IEEEPARstart{D}{eep} learning algorithms have been successful in a variety of computer vision tasks, such as  image classification~\cite{caldas2018leaf}, object detection~\cite{shuai2022balancefl}, and instance segmentation~\cite{ye2022licam}. However, these methods typically require a large amount of labeled data to achieve high accuracy, which is usually challenging and time-consuming to obtain in real-world applications~\cite{garcia2017few,jiang2022pgada}. To address this issue, the field of \emph{Few-shot Learning (FSL)}~\cite{vinyals2016matching,snell2017prototypical} has gained attention as a way to learn from a limited number of examples with supervised information and then effectively adapt it in testing phase\cite{finn2017model,bennequin2021bridging}. The main challenge in FSL is that the training and testing labels are different, meaning the labels in the testing phase are unseen before.

To address this issue, FSL methods classify the query set (unlabeled data) into the support sets (labeled data) by comparing the similarity of the learned embeddings, rather than training a classifier for each testing label. For example, MatchingNet~\cite{vinyals2016matching} classifies the query example to the label of the most similar example in the support set. ProtoNet~\cite{snell2017prototypical} assumes a single prototype, e.g., centroid, of each class in the embedding space and then compares the distances between the query example and the prototype representation. Recent studies in FSL aim to estimate data distributions~\cite{song2022comprehensive} by enhancing model robustness through techniques such as data augmentation~\cite{zhong2020random,devries2017improved}, transfer learning~\cite{nakamura2019revisiting,cai2020cross}, and multi-modal learning~\cite{li2019large,wang2020generalizing}. Data augmentation methods approximate the true data distribution using pre-defined rules~\cite{li2021learning,gao2018low} or objectives~\cite{ma2020metacgan,kim2019variational}. Transfer and multi-modal learning focus on using knowledge from other domains or modalities to improve performance~\cite{li2019large, schwartz2022baby}.

As shown in Fig~\ref{fig:flow_1}, compared to the conventional FSL, the novel few-shot learning problem is considered i.e., \emph{Support-Query Shift}~\cite{bennequin2021bridging,aimen2022adversarial}, denoting the distribution shift between the support set and query set embeddings. This shift occurs in the learned embeddings between the support and query sets because the label spaces between the training and the testing phase are distinct in few-shot learning. 
In particular, previous work e.g., TP~\cite{bennequin2021bridging} investigates the concept of support-query shifts at the density level, denoting a singular shift encompassing a multitude of densities in meta-training and meta-testing phases. Following TP, SQS+~\cite{aimen2022adversarial} further explores the variation in distributions between task-training and task-testing phases, enlarging a distribution mismatch occurring between the support and query sets.
However, in real-world scenarios the shifts are usually unknown and varied, making it difficult to estimate in advance.  Therefore, in this paper, we first propose, RSQS, a realistic but more difficult support-query shift few-shot learning challenge. The key feature of RSQS is that the individual samples in a meta-task are subjected to \textit{multiple} distribution shifts in each support and query set, which is in contrast to previous settings such as TP~\cite{bennequin2021bridging} and SQS+~\cite{aimen2022adversarial} with a single distribution shift across the collective samples within a meta-task.

Concretely, we analyze the RSQS from two aspects, i.e., inter-domain bias and intra-domain variance. On the one hand, inter-domain bias between support and query images may come from the environment where the picture is taken, e.g., foggy and high-luminance, which can significantly damage the model performance, especially in a few-shot application. To alleviate inter-domain bias, optimal transportation~\cite{courty2016optimal} has been regarded as an effective tool to align the learned embeddings over distinct spaces into the same latent space.  On the other hand, the images might be captured by various devices, e.g., smartphones and single-lens reflex cameras, leading to an implicit \textit{intra-domain variance}, between the support and query domains. We theoretically prove that such inter-domain variance could easily mislead the result of optimal transportation. To address intra-domain variance, data augmentation techniques have been used to create and train on more instances to derive a more robust classification model~\cite{zhao2020maximum,gong2021maxup} This is achieved by modifying a single image~\cite{simonyan2014very} or combining multiple images~\cite{yun2019cutmix,zhang2017mixup} at a pixel level as additional training examples. Besides, adversarial training techniques such as projected gradient descent (PGD)~\cite{samangouei2018defense} and AugGAN~\cite{huang2018auggan} are used to find the perturbed images that can confuse the model by predicting an incorrect label as additional training samples. However, these methods usually require numerous iterations to generate the adversarial examples by optimizing a predefined adversarial loss, which is computationally intensive. Additionally, a trade-off has been shown between the models' accuracy and robustness against adversarial examples~\cite{gong2021maxup}.

According to the aforementioned two aspects, in this paper, we proposed \textit{\textbf{DU}al adversarial \textbf{AL}ignment framework (\workname)} to relieve the negative effects from the \emph{inter-domain bias} and \emph{intra-domain variance} in the support-query shift. First, to address the inter-domain bias of unknown shifts in support and query sets, DuaL trains a \textit{repairer network} to correct the query data in an adversarial manner. We identify various shifts in advance to corrupt the original data as the synthesized perturbed inputs to train the repairer network by minimizing the feature-level distance between the original data and the fixed data. Our repairer network is able to solve multiple types of biases with a single model, which demonstrates our framework is a biases-agnostic technique for real-world applications. Second, to reduce intra-domain variance in the support and query sets, DuaL uses a \textit{generator network} to adversarially train a more robust feature extractor by generating perturbed data as the \textit{hard} examples, i.e., less similar to the original data point in the embedding space but should be classified into the same class. 
 DuaL can find the perturbed examples more efficiently than previous adversarial training methods~\cite{aimen2022adversarial} by generating the least similar examples directly, which is the hardest example to enhance the model's generality. Lastly, we use \textit{smooth optimal transport}, which regularizes the negative entropy of the transportation plan to take more query data points as the anchor nodes, leading to a higher error tolerance of the transportation plan, which considers more data points to avoid overfitting in certain samples.  The contributions are summarized as follows.
 
\begin{itemize}
     \item We propose RSQS, a novel realistic support-query shift few-shot learning setting. Compared to the previous challenges, RSQS considers the distribution shift from individual samples in a meta-task subjected to multiple distribution shifts.
    \item We propose \emph{\textbf{Du}al Adversarial \textbf{Al}ignment Framework (DuaL)} to solve RSQS by relieving the inter-domain bias and intra-domain variance. \workname~use corrupted data to train a repairer to reduce the distribution bias, and next, train a robust feature extractor in an adversarial manner to relieve the variance across different domains. To avoid overfitting, regularized optimal transport is used, which adopted the negative entropy of the transportation plan to take more query data points as the anchor nodes. 
    \item We also build a benchmark with seven state-of-the-art baselines of RSQS among three datasets. Extensive experiments manifest that \workname~outperforms other state-of-the-art methods. Lastly, we also derived a theoretical bound and time complexity of \workname~to enhance the completeness of this work. 
\end{itemize}

\begin{figure}[t]
  \centering
  \includegraphics[width = \linewidth]{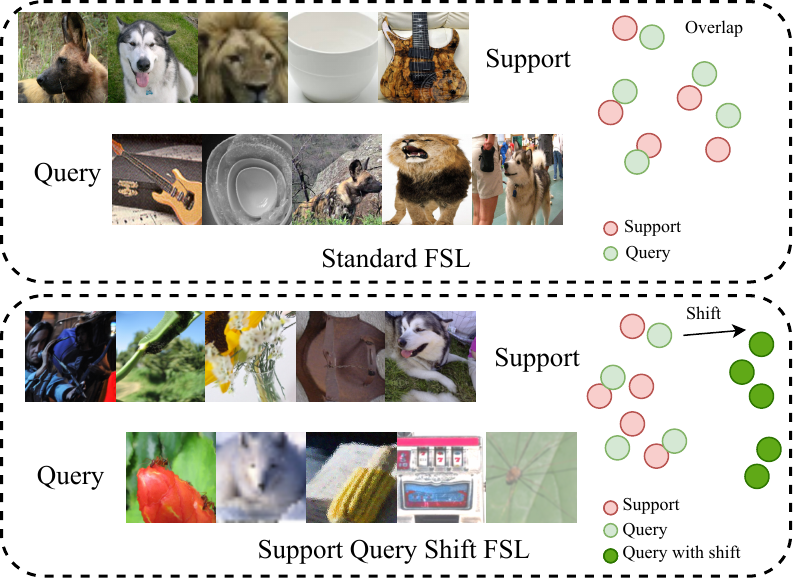}
  \caption{ Illustration of the support-query shift in a 5-way 1-shot classification task, where the support set and the query set are embedded into different distributions.}
  \label{fig:flow_1}
\end{figure}

\begin{figure*}[!t]
  \centering
  \includegraphics[width = \linewidth]{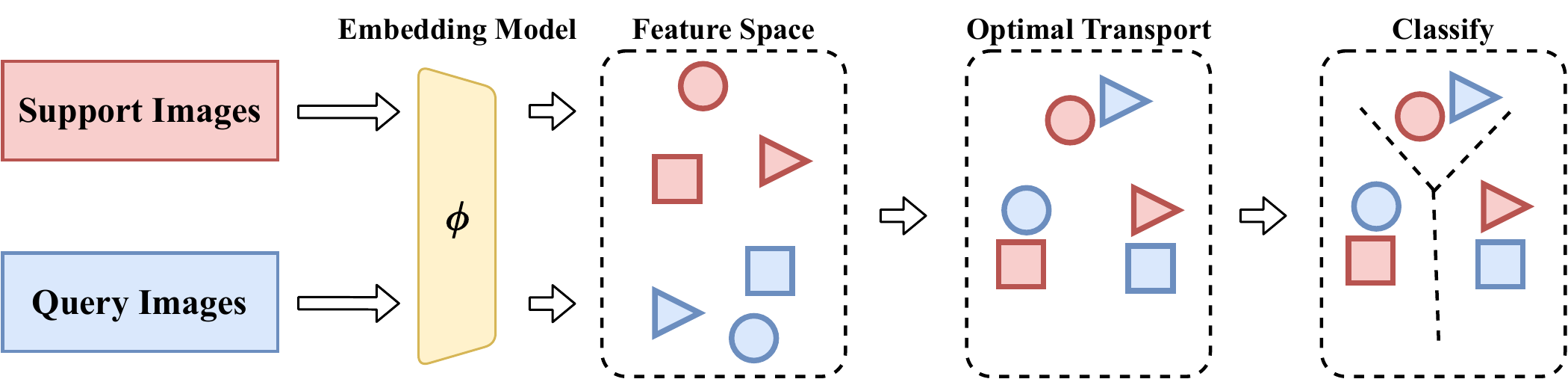}
  \caption{First, we embed the support set and the query set via an embedding model $\phi$ into a feature space. Next, optimal transportation is employed to align the support set (red) and the query set (blue). But, the small perturbations may misguide the transported results, leading to wrong predictions, i.e., classifying the blue circle to the red triangle.}
  \label{fig:flow_2}
\end{figure*}

\section{ Background}

\subsection{Few Shot Learning and Distribution Shifts }
Given a labeled support set
$\mathcal{S} = \cup_{c \in \mathcal{C}} \mathcal{S}^c$, with $\mathcal{C}$ classes, where each class $c$ has $|S^c|$ labeled examples, the goal of few-shot learning is to classify the query set $\mathcal{Q}= \cup_{c \in \mathcal{C}} \mathcal{Q}^c$ into these $\mathcal{C}$ classes. Let $\phi$ denotes the embedding model $\phi(x) \in \mathrm{R}^d $, which encodes the  data point $x$ to the $d$-dimensional feature. $\phi$ is learned from a labeled training set $\mathcal{D} = \{x_i,y_i\}_{i\in[1,|\mathcal{D}|]}$, where $x_i$ is the data point and $y_i$ is the corresponding label. The embedding model can be learned by empirical risk minimization (ERM),
\begin{equation*}
    \min_{\phi,\theta} E_{\{x,y\}\sim \mathcal{D}}[L(\theta(\phi(x)),  y)],
\end{equation*}
where $\theta$ is a trainable parameter to map the embedding $\phi(x_i)$ to the class $y_i$.

Through the embedding model $\phi$, we can encode the data points in support set  (i.e., $x_{s,i} \in \mathcal{S}$) and query set (i.e., $x_{q,j} \in \mathcal{Q}$) to the feature $\phi(x_{s,i})$ and $\phi(x_{q,j})$, respectively. These features are used as input to a comparison function $M$, which measures the distance , e.g., $l_2$-norm, between two samples. Specifically, we classify the query example $\phi(x_{q,j})$ by averaging the embedding $\phi(x^c_{s,i})$ of the support set in class $\mathcal{S}^c$, which can be written as follows.
\begin{align}\small
  \phi^c(x_{s}) =  \frac{1}{|S^c|} \sum_{x_{s,i} \in S^c} \phi(x_{s,i}),  \nonumber
  \\
  y_q =  \argmin_{c} M(\phi^c(x_{s}) ,\phi(x_{q,j})). 
  \nonumber
\end{align}

While the conventional few-shot learning methods assume the support set and the query set lie in the same distribution, a more realistic setting is that the support set $\mathcal{S}$ and the query set $\mathcal{Q}$ follow different distributions, i.e., the support-query shift~\cite{bennequin2021bridging}. While these two sets are sampled from different distributions $\mu_s$ and $\mu_q$, the embeddings for the support set  $\mathcal{S}$ (i.e., $\phi(x_{s})$) and the query set $\mathcal{Q}$ (i.e., $\phi(x_{q})$) are likely to lie in different embedding spaces. Thus, it would lead to a wrong classification result via the comparison module $M$~\cite{bennequin2021bridging}.

Conventional few-shot learning methods assume that the support set and the query set are drawn from the same distribution, but in reality, the support set $\mathcal{S}$ and the query set $\mathcal{Q}$ may come from different distributions, referred to as support-query shift~\cite{bennequin2021bridging}. When the support set $\mathcal{S}$ and query set $\mathcal{Q}$ are sampled from different distributions $\mu_s$ and $\mu_q$, the embeddings for the support set $\phi(x_{s})$ and the query set $\phi(x_{q})$ are likely to be in different embedding spaces. This leads to incorrect classification results when using the comparison module $M(\phi(x_{s}), \phi(x_{q}))$~\cite{jiang2022pgada}.

\subsection{Optimal Transportation}

To align different distributions in support-query shift, optimal transportation~\cite{courty2016optimal} is one of the effective techniques using a transportation plan $\pi(\mu_s, \mu_q)$, which can formally be written as follows.
\begin{align}\small
    W (\mu_s,\mu_q)=  \inf_{\pi \in \Pi(\mu_s,\mu_q)} \int  w(x_s,x_q)d\pi(x_s,x_q),
\end{align}
where $\Pi(\mu_s, \mu_q)$ is the set of transportation plans (or couplings) and $w$ is the cost function, and $W$ is the overall cost of transporting distribution $\mu_s$ to $\mu_q$. In our practice, we select $l_2$-norm of the embedding vector, i.e., $\Vert \phi(x_{s}) - \phi(x_{q})\Vert^2_2$, as our distance function $w$.

Since there are only finite samples for both the support set $x_{s,i} \in \mathcal{S}$ and the query set $x_{q,j} \in \mathcal{Q}$, the discrete optimal transportation adopts the empirical distributions to estimate the probability mass function $\hat{\mu}_s =\sum \delta_{s,i}$, and $\hat{\mu}_q = \sum \delta_{q,j}$, where $\delta_{s,i}$ and $\delta_{q,j}$ is the Dirac distribution. We obtain
\begin{align}\small
\label{eq:basic_ot}
     & \pi^{\ast} = \argmin_{\pi} \sum_{\substack{x_{s,i} \sim \hat{\mu}_{s}\\ x_{q,j} \sim \hat{\mu}_{q}} } w(x_{s,i},x_{q,j}) \pi(x_{s,i},x_{q,j})
\end{align}
Then, Sinkhorn's algorithm~\cite{cuturi2013sinkhorn} is adopted to solve the optimal transportation plan $\pi^{\ast}$.

Equipped with the optimal plan $\pi^{\ast}$, we transport the embeddings of the support set $\phi(x_{s,i})$ to $\hat{z}_{s,i}$ by barycenter mapping~\cite{courty2016optimal} to adapt the support set to the query set.
\begin{equation}\small
    \hat{\phi}(x_{s,i}) =\frac{\sum_{x_{q,j}\in\mathcal{Q}}\pi^{\ast}(x_{s,i},x_{q,j}) \phi(x_{q,j}) }{ \sum_{x_{q,j}\in\mathcal{Q}}\pi^{\ast}(x_{s,i},x_{q,j})}.
    \label{eq:trans_q}
\end{equation}
$\hat{\phi}(x_{s,i})$ denotes the transported embedding of $x_{s,i}$. Therefore, we can correctly measure the distance metric  $M(\hat{\phi}(x_{s,i}), \phi(x_{q,j}))$ in a shared embedding space.

\subsection{The Harm of Perturbation in Optimal Transport }
\label{sec:HarmOfShift}

While optimal transport has achieved convincing results in support-query shift few-shot learning, there is another challenge that comes from the quality of the embedding $\phi(x)$, i.e., intra-domain variance, in the embedding may mislead the transportation plan. For example, clean images' embeddings may give a better plan than foggy images since fog brings additional noise into the original data. Formally, such misestimation can be defined as follows.

\begin{lemma} The error of the transportation cost is 
    \begin{equation*}\small
      W_\sigma(\mu_s,\mu_q)  \leq W(\mu_s,\mu_q) \leq  W_\sigma(\mu_s,\mu_q) + \sqrt{d(\sigma_s^2 + \sigma_q^2)},
    \end{equation*} 
    
    where $W_\sigma(\mu_s,\mu_q) \coloneqq W(\mu_s*\mathcal{N}_{\sigma_s},\mu_q*\mathcal{N}_{\sigma_q})$ denotes the original support and query set distribution $\mu_s$ and $\mu_q$ being perturbed with Gaussian noises $\sigma_s$ and $\sigma_q$. 
\label{lemma:err}
\end{lemma}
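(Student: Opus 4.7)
The plan is to establish the sandwich inequality by producing two explicit couplings---one between $(\mu_s,\mu_q)$ and one between the Gaussian-convolved pair $(\tilde\mu_s,\tilde\mu_q) \coloneqq (\mu_s*\mathcal{N}_{\sigma_s},\mu_q*\mathcal{N}_{\sigma_q})$---and using each in turn to bound one side of the sandwich.

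For the lower bound $W_\sigma \le W$, I would appeal to the contractivity of Gaussian convolution under the Wasserstein distance. Starting from an optimal coupling $\pi^\ast$ of $(\mu_s,\mu_q)$ with $(Y_s,Y_q) \sim \pi^\ast$, I would draw a single standard Gaussian $Z \sim \mathcal{N}(0,I_d)$ independent of $(Y_s,Y_q)$ and use the commonly-driven noise coupling $(Y_s+\sigma_s Z,\, Y_q+\sigma_q Z)$, which is a valid coupling of $(\tilde\mu_s,\tilde\mu_q)$. Its transport cost simplifies in the natural symmetric regime $\sigma_s = \sigma_q$ to $\mathbb{E}\|Y_s-Y_q\| = W$, yielding $W_\sigma \le W$ by contractivity.

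For the upper bound $W \le W_\sigma + \sqrt{d(\sigma_s^2+\sigma_q^2)}$, I would build a coupling of $(\mu_s,\mu_q)$ from an optimal plan $\tilde\pi^\ast$ of the convolved pair by Bayes disintegration. Given $(\tilde X_s,\tilde X_q) \sim \tilde\pi^\ast$, I write $\tilde X_s = X_s + Z_s$ and $\tilde X_q = X_q + Z_q$ with $X_s \sim \mu_s,\, X_q \sim \mu_q$ and $Z_s \sim \mathcal{N}_{\sigma_s},\, Z_q \sim \mathcal{N}_{\sigma_q}$. The pair $(X_s,X_q)$ is then a valid coupling of $(\mu_s,\mu_q)$, and combining the triangle inequality with Jensen's inequality gives
\[
\mathbb{E}\|X_s - X_q\| \;\le\; \mathbb{E}\|\tilde X_s - \tilde X_q\| + \mathbb{E}\|Z_s - Z_q\| \;\le\; W_\sigma + \sqrt{d(\sigma_s^2+\sigma_q^2)},
\]
where the last step uses that $Z_s - Z_q$ is a zero-mean Gaussian with trace covariance $d(\sigma_s^2+\sigma_q^2)$, so that $\mathbb{E}\|Z_s-Z_q\| \le \sqrt{\mathbb{E}\|Z_s-Z_q\|^2} = \sqrt{d(\sigma_s^2+\sigma_q^2)}$.

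The principal obstacle will be arranging the joint distribution so that the disintegrated noises $Z_s,Z_q$ are genuinely independent and not merely conditionally independent given $(\tilde X_s,\tilde X_q)$. Without independence, the cross term $\mathbb{E}[Z_s \cdot Z_q]$ in $\mathbb{E}\|Z_s-Z_q\|^2$ need not vanish, and the Jensen step then yields only the looser constant $\sqrt{d}(\sigma_s+\sigma_q)$ obtainable from the naive three-step triangle decomposition $\mu_s \to \tilde\mu_s \to \tilde\mu_q \to \mu_q$ together with $W(\mu,\mu*\mathcal{N}_\sigma) \le \mathbb{E}\|Z\| \le \sqrt{d}\,\sigma$. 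A careful construction that introduces $(Z_s,Z_q)$ as truly independent Gaussians alongside a consistent joint law on $(X_s,X_q,\tilde X_s,\tilde X_q)$ restores the tight pooled-variance constant stated by the lemma.
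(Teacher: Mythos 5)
Your route is primal (explicit couplings) where the paper's is dual: the paper fixes an optimal $1$-Lipschitz potential $w^\ast$ for $W(\mu_s,\mu_q)$ via Kantorovich--Rubinstein, tests it against the convolved measures, and bounds the two perturbation terms by $\sqrt d\,\sigma_s$ and $\sqrt d\,\sigma_q$ using $\|w^\ast\|_{\mathrm{Lip}}\le 1$. That yields $W \le W_\sigma + \sqrt d\,(\sigma_s+\sigma_q)$---precisely the intermediate constant your triangle-inequality decomposition reaches. The two routes are genuinely different, but they stall at the same place. The obstacle you flag is not removable by a ``careful construction'': once $(\tilde X_s,\tilde X_q)\sim\tilde\pi^\ast$ is fixed and you disintegrate $\tilde X_s = X_s+Z_s$, $\tilde X_q = X_q+Z_q$, the conditional laws of $Z_s$ and $Z_q$ are neither Gaussian nor independent of each other, so the cross term in $\mathbb{E}\|Z_s-Z_q\|^2$ really can survive. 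The paper's closing appeal to Cauchy--Schwarz also goes the unhelpful direction ($\sigma_s+\sigma_q \ge \sqrt{\sigma_s^2+\sigma_q^2}$), so neither argument actually produces the pooled-variance constant in the lemma statement; what both honestly deliver is $\sqrt d\,(\sigma_s+\sigma_q)$.

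On the lower bound, your caution is sharper than the paper's. Common-noise coupling gives $W_\sigma \le W$ only when $\sigma_s=\sigma_q$, and the paper's one-line ``$W$ non-increasing under convolutions'' likewise covers only the case of convolving both marginals with the \emph{same} kernel. For $\sigma_s\neq\sigma_q$ the left inequality can fail outright: take $\mu_s=\mu_q=\delta_0$, so $W=0$ while $W_\sigma=W(\mathcal{N}_{\sigma_s},\mathcal{N}_{\sigma_q})>0$. You have identified both gaps more explicitly than the paper resolves them; the lemma as stated needs either the hypothesis $\sigma_s=\sigma_q$ or a weaker constant on the upper bound.
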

Note that  $|\cdot|$ is the absolute value, and $*$ is the convolution operator. Based on Lemma~\ref{lemma:err}, we estimate the error of transported embedding $\hat{\phi}(x_{s,i})$ in Eq. (\ref{eq:trans_q}).

\begin{proof}
The left side inequality immediately follows because $W$
is non-increasing under convolutions, since $\mathcal{N}_{\sqrt{\sigma_s^2+\sigma_q^2}}$ = $\mathcal{N}_{\sigma_s}*\mathcal{N}_{\sigma_q}$, where $*$ is the convolution operator. 

In the right side of the inequality, we adopt Kantorovich-Rubinstein duality to write the optimal transport as follows.
\begin{align}\small
    W(\mu_s,\mu_q) = \sup_{\Vert w \Vert_{Lip} \leq 1 } E_{\mu_s} [w] - \mathrm{E}_{\mu_q} [w]
\end{align}

\begin{align}\small
    W_\sigma(\mu_s,\mu_q) = \sup_{\Vert{w} \Vert_{Lip} \leq 1 } E_{\mu_s*\mathcal{N}_{\sigma_s}} [w_{\sigma}] - \mathrm{E}_{\mu_q*\mathcal{N}_{\sigma_q}} [w_{\sigma}]
\end{align}
where $\Vert{\cdot}\Vert_{Lip}$ is the Lipschitz norm. Letting  $w^{\ast}$ be optimal for $W(\mu_s,\mu_q)$, we obtain,
\begin{align}\small
     W_\sigma(\mu_s,\mu_q) =  E_{\mu_s*\mathcal{N}_{\sigma_s}} [w^{\ast}] - \mathrm{E}_{\mu_q*\mathcal{N}_{\sigma_q}} [w^{\ast}].
    \label{eq:exp}
\end{align}
Let $X_s\sim \mu_s$, $Z_s \sim {N_{\sigma_s}}$ as independent random variables, we have,
\begin{align}\small
    \label{eq:z_s}
     &| E_{\mu_s}[w^{\ast}] - E_{\mu_s*N_{\sigma_s}}[w^{\ast}] |\\  \nonumber
     = & E [w^{\ast} (X_s)] - E [w^{\ast} (X+Z_s)] \\
     \leq& E [\Vert{Z_s}\Vert_2^2] =  \sqrt{d} \sigma_s. \nonumber
\end{align}
where the last inequality uses $\Vert{w^\ast}\Vert_{Lip} \leq 1$. $d$ is the dimension of the embedding vector. Similarly, $X_q\sim \mu_q$, $Z_q \sim {N_{\sigma_q}}$ as independent random variables, we have,
\begin{align}\small
    \label{eq:z_q}
     &| E_{\mu_q}[w^{\ast}] - E_{\mu_q*N_{\sigma_q}}[w^{\ast}] |\\  \nonumber \nonumber
     = & E [w^{\ast} (X_q)] - E [w^{\ast} (X+Z_q)] \\
    \leq & E [\Vert{Z_q}\Vert_2^2] =\sqrt{d} \sigma_q. \nonumber
\end{align}
By inserting Eq.~(\ref{eq:z_s}) and Eq.~(\ref{eq:z_q}) to Eq.~(\ref{eq:exp}), and Cauchy-Schwarz inequality,  we concludes the proof.
\end{proof}


\begin{theorem} The error of the transported embedding is 
    \begin{align}\small
        E[\Vert{\hat{\phi}(x_{s,i}) - \hat{\phi}_{\sigma}(x_{s,i})}\Vert_2^2] = \sqrt{ d(\sigma_s^2 +\sigma_q^2)}, 
    \end{align} 
where $\hat{\phi}_{\sigma}(x_{s,i})$ is the transported embedding from the perturbed distribution $W_\sigma(\mu_s,\mu_q)$. 
    \label{thm:err}
\end{theorem}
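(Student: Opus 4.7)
The plan is to reduce this theorem directly to Lemma~\ref{lemma:err} by relating the displacement in the barycenter-mapped embedding to the change in the optimal transport cost under Gaussian perturbation.

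First, I would expand both $\hat{\phi}(x_{s,i})$ and $\hat{\phi}_{\sigma}(x_{s,i})$ using the barycenter formula in Eq.~(\ref{eq:trans_q}). The perturbed version uses the plan $\pi^{\ast}_{\sigma}$ computed from $\mu_s \ast \mathcal{N}_{\sigma_s}$ and $\mu_q \ast \mathcal{N}_{\sigma_q}$ in place of $\pi^{\ast}$. Both mappings can be interpreted as a conditional expectation of $\phi(x_q)$ given $x_{s,i}$ under the respective transport coupling. Writing the difference in this conditional-expectation form exposes that $\hat{\phi}(x_{s,i}) - \hat{\phi}_{\sigma}(x_{s,i})$ is exactly the change in the barycenter induced by swapping $\pi^{\ast}$ for $\pi^{\ast}_{\sigma}$.

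Second, I would bound the pointwise squared displacement by the conditional second moment of the embedding difference, i.e.\
\begin{equation*}
\Vert \hat{\phi}(x_{s,i}) - \hat{\phi}_{\sigma}(x_{s,i})\Vert_2^2 \leq E_{x_q\mid x_{s,i}}\bigl[\Vert \phi(x_q) - \phi_{\sigma}(x_q)\Vert_2^2\bigr],
\end{equation*}
via Jensen's inequality applied to the barycenter. Taking expectation over $x_{s,i}\sim\hat{\mu}_s$ then turns the right-hand side into a transport-cost-type quantity, which Lemma~\ref{lemma:err} controls by $\sqrt{d(\sigma_s^2+\sigma_q^2)}$. This immediately yields the upper-bound direction of the claimed equality.

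The main obstacle is obtaining the matching lower bound so that the statement is an \emph{equality} rather than merely an inequality. For this I would argue tightness of the Kantorovich--Rubinstein step that drives Lemma~\ref{lemma:err}: since the Gaussian perturbations $\mathcal{N}_{\sigma_s}$ and $\mathcal{N}_{\sigma_q}$ are isotropic, the Lipschitz-$1$ witness $w^{\ast}$ from the dual formulation saturates the bound $E[\Vert Z_s\Vert_2]=\sqrt{d}\sigma_s$ (and similarly for $Z_q$) in expectation, so Cauchy--Schwarz becomes an equality when aggregated across the coordinates of the embedding. Combined with the upper bound, this pins $E[\Vert \hat{\phi}(x_{s,i}) - \hat{\phi}_{\sigma}(x_{s,i})\Vert_2^2]$ at $\sqrt{d(\sigma_s^2+\sigma_q^2)}$, completing the proof.
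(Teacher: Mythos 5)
Your proposal takes a genuinely different route from the paper, but the central step has a gap that the rest of the argument does not survive.

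The paper's proof does not go through Jensen's inequality at all. Instead it asserts (heuristically) that the normalized barycentric weight $\hat{\pi}^{\ast}_i = \pi^{\ast}(x_{s,i},x_{q,j})\,/\,\sum_{j}\pi^{\ast}(x_{s,i},x_{q,j})$ is itself distributed as $\mathcal{N}_{\sigma_s}$, models the perturbed query coordinates as $X_q^\sigma \sim \mu_q \ast \mathcal{N}_{\sigma_q}$, and then concludes by convolving the two Gaussian factors so that $\hat{X}_s \sim \mu_s \ast \mathcal{N}_{\sqrt{\sigma_s^2+\sigma_q^2}}$. This gives the $\sqrt{d(\sigma_s^2+\sigma_q^2)}$ figure in one shot, at the price of a strong distributional assumption on $\hat{\pi}^{\ast}_i$ and a conflation of $E[\cdot]$ with a second moment. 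Your route — write each barycenter as a conditional expectation, subtract, control the difference by a second moment, then invoke Lemma~\ref{lemma:err} — is cleaner in spirit and avoids pretending that the barycentric weights are Gaussian.

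However, the Jensen step as you wrote it does not go through. You want
\begin{equation*}
\bigl\Vert \hat{\phi}(x_{s,i}) - \hat{\phi}_{\sigma}(x_{s,i})\bigr\Vert_2^2 \;\leq\; E_{x_q\mid x_{s,i}}\bigl[\Vert \phi(x_q) - \phi_{\sigma}(x_q)\Vert_2^2\bigr],
\end{equation*}
but $\hat{\phi}(x_{s,i})$ is the barycenter under $\pi^{\ast}$ while $\hat{\phi}_{\sigma}(x_{s,i})$ is the barycenter under a \emph{different} plan $\pi^{\ast}_{\sigma}$. Their difference is therefore a difference of two conditional expectations taken with respect to two distinct conditional laws, not a single conditional expectation of $\phi(x_q)-\phi_\sigma(x_q)$. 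Jensen gives nothing unless you first construct a joint coupling of the two couplings whose marginals recover both $\pi^{\ast}(\cdot\mid x_{s,i})$ and $\pi^{\ast}_{\sigma}(\cdot\mid x_{s,i})$; that construction (and a stability estimate showing the two plans are close when the perturbation is small) is exactly the missing content, and it is the hard part. Lemma~\ref{lemma:err} only controls the scalar transport cost $W$ versus $W_\sigma$; it says nothing directly about how the optimal plan itself moves, so it cannot be invoked to bound a barycenter difference without that bridge.

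Separately, your tightness argument for the matching lower bound is not a proof. Cauchy--Schwarz and the Kantorovich--Rubinstein step are genuine inequalities, and isotropy of $\mathcal{N}_{\sigma_s}$, $\mathcal{N}_{\sigma_q}$ does not by itself force the Lipschitz witness $w^{\ast}$ to saturate them; that depends on $\mu_s$, $\mu_q$, and $w^{\ast}$ in a way you have not pinned down. In truth the statement as an \emph{equality} is suspect — the paper's own argument only supports an upper estimate — so if you keep this route you should expect (and aim for) an inequality, not an equality.
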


\begin{proof}
Base on Lemma~\ref{lemma:err}, barycentric coordinate is defined as follows,
\begin{equation}\small
    \hat{\pi}^{\ast}_i = \frac{\pi^{\ast}(x_{s,i},x_{q,j}) }{ \sum_{x_{q,j}\in\mathcal{Q}}\pi^{\ast}(x_{s,i},x_{q,j})} \sim \mathcal{N}_{\sigma_s} 
    \label{eq:xi}
\end{equation}
Let $X_q\sim \mu_q$, $X^\sigma_q \sim {\mu_q * N_{\sigma_q}}$ as independent random variables, 
\begin{equation}\small
    E[X^{\sigma(t)}_q - X_q^{(t)}] = \sigma_q,
    \label{eq:err_q}
\end{equation}
where $X^{\sigma(t)}_q$ and $X_q^{(t)}$ denotes the $t$-th dimension of random variable $X^\sigma_q$ and $X_q$, respectively.

Combining Eq. (\ref{eq:xi}) and Eq. (\ref{eq:err_q}), the projected distribution $\hat{X}_s  \sim {\mu_s * N_{\sigma_s} * N_{\sigma_q}} = {\mu_s * N_{\sqrt{\sigma_s^2+\sigma_q^2}}}$.
\begin{equation}\small
    E[\hat{X}_s - X_s] = \sqrt{d(\sigma_s^2 + \sigma_q^2)}.
\end{equation}

As the noise level, i.e., $\sigma_s$, and $\sigma_q$, increases, it is more likely to mislead the transportation plan and alleviate the model's performance. Therefore, it's non-trivial to learn a better embedding function $\phi(x)$ having a better capability of noise tolerance such that $\phi(x_p) \approx \phi(x)$, where $x_p$ is original image $x$ with the small perturbation. 

\end{proof}

\section{Dual Adversarial Alignment Framework (\workname)}
\label{sec:method}
As we mentioned in Sec.~\ref{sec:HarmOfShift}, the low embedding quality will mislead the transportation plan. Therefore, we aim to obtain a \emph{clean} feature embedding to relieve such misleading.
In this section, we present our framework, \workname, from two perspectives. We first propose semantic-aware data generation and self-supervised manner adversarial training to obtain clean features for optimal transportation. Next, a regularized optimal transportation is introduced to align support and query set better. The workflow of \workname~is presented in Fig.~\ref{fig:flow_2}.

\subsection{Realistic Support-Query Shift Few-shot
Learning (RSQS)}
\label{Sec:RSQS}

In support-query shift few-shot learning setup~\cite{bennequin2021bridging,aimen2022adversarial},  a disjoint distribution exists even at the task composition level, i.e., $\mathcal{T}_{S} \neq \mathcal{T}_{Q}$, where $\mathcal{T}_{S}$,   $\mathcal{T}_{Q}$ are the distributions on support and query sets respectively. 
Let $\mathcal{\mathbf{D}}_{Train}$ and $\mathcal{\mathbf{D}}_{Test}$ be the set of domains for the training and testing phases.  RSQS follows the standard SQS assumption of disjoint meta-train and meta-test domains, $\mathcal{\mathbf{D}}_{train} \cap \mathcal{\mathbf{D}}_{test} = \varnothing$.  The distribution of the support set and the query set of RSQS setting is mismatched by \textit{multiple} unknown shifts, i.e.,  $|\mathcal{T}_{S}| \neq  |\mathcal{T}_{Q}|$, where $|\cdot|$ denotes the number of different shifts, shown in Fig.~\ref{fig:multi_vis}. In addition, the shifts are also different in each sample among the support set and query set in the meta-testing phase of RSQS, i.e., for each instance $q_i, q_j \in \mathcal{Q}$, $\mathcal{T}_Q^i \neq \mathcal{T}_Q^j$, same to the support set.

As shown in Table~\ref{table:different}, conventional machine learning (ML) and FSL differs from the distribution shift between the training and testing phase. TP~\cite{bennequin2021bridging} define the support-query shift few-shot learning but assumed only a similar but disjoint support-query shift in the meta-train and meta-test datasets. SQS+~\cite{aimen2022adversarial} considers an unknown shift that occurs in the support and query set. However, RSQS enhances the uncertainties stemming from unknown shifts in the support and query sets by implementing multiple shifts within each instance of a meta-task.

\begin{table}[!ht]
\resizebox{\columnwidth}{!}{
    \centering
    \begin{tabular}{c|c|c|c|c}
    \hline
        Type  & 
        \makecell[c]{Training and \\ Testing Set} & 
         \makecell[c]{Support Query \\ Shifts} & 
        \makecell[c]{Multiple \\ Shifts}  & 
        \makecell[c]{Instance \\ Shifts}  \\ \hline 
    ML & \XSolidBrush  & \XSolidBrush   & \XSolidBrush & \XSolidBrush \\
    FSL~\cite{snell2017prototypical} & \CheckmarkBold & \XSolidBrush   & \XSolidBrush & \XSolidBrush \\
    TP~\cite{bennequin2021bridging} &  \CheckmarkBold & \CheckmarkBold   & \XSolidBrush & \XSolidBrush \\
    SQS$+$~\cite{aimen2022adversarial} & \CheckmarkBold & \CheckmarkBold   & \XSolidBrush & \XSolidBrush  \\
    \makecell[c]{RSQS (Ours)}  & \CheckmarkBold & \CheckmarkBold   & \CheckmarkBold & \CheckmarkBold \\ \hline
    \end{tabular}
}
    \caption{Comparison of different shift settings.}
    \label{table:different}
\end{table}

\begin{figure}[t]
  \centering
  \includegraphics[width = \linewidth]{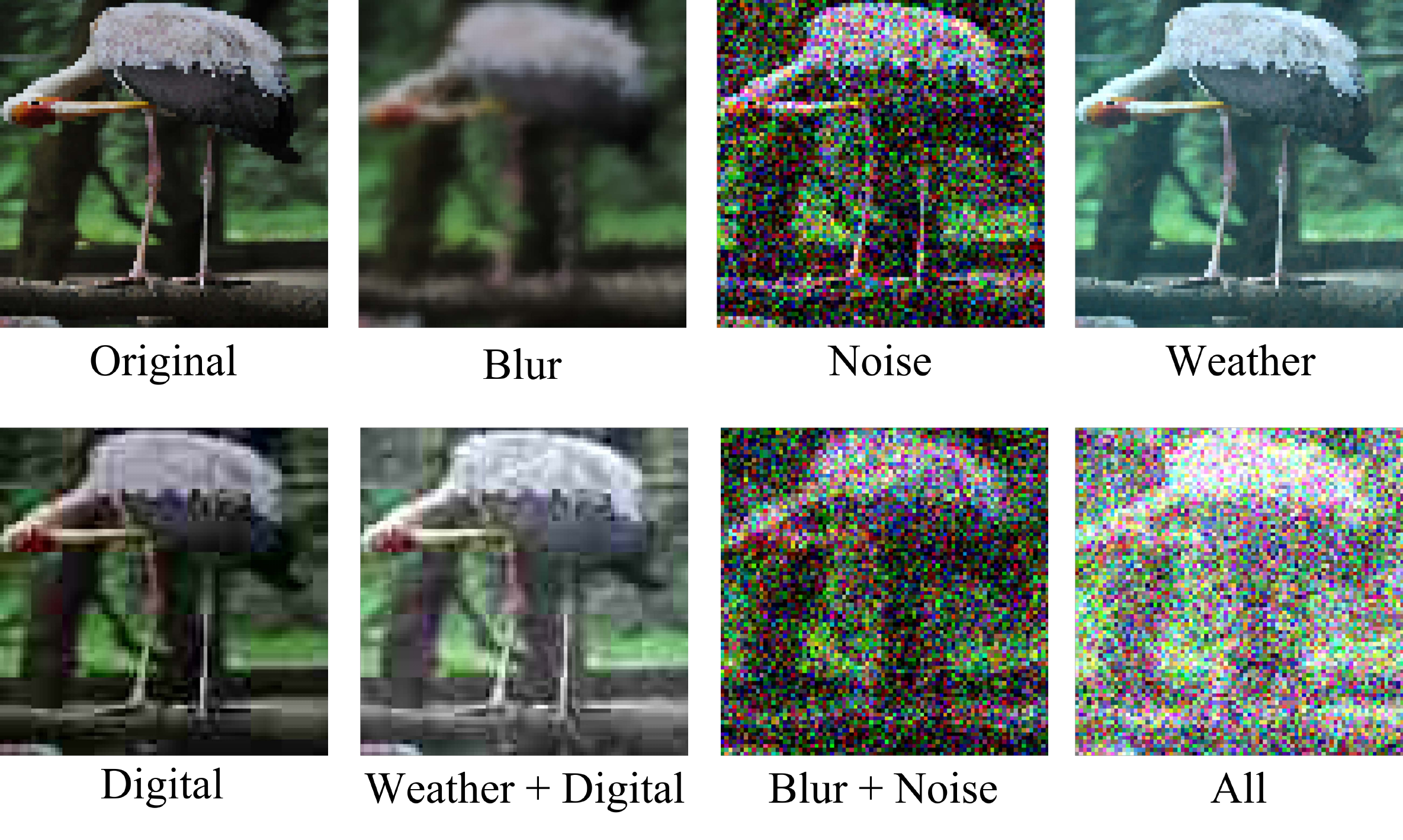}
  \caption{ Visualization of the impact on multiple shifts. In contrast to the conventional SQS few-shot learning, in RSQS, multiple shifts are adopted for each meta-testing task which is more closely to the complex and difficult real-world scenarios.}
  \label{fig:multi_vis}
\end{figure}


\begin{figure*}[t]\small
  \centering
  \includegraphics[width=1.02\linewidth]{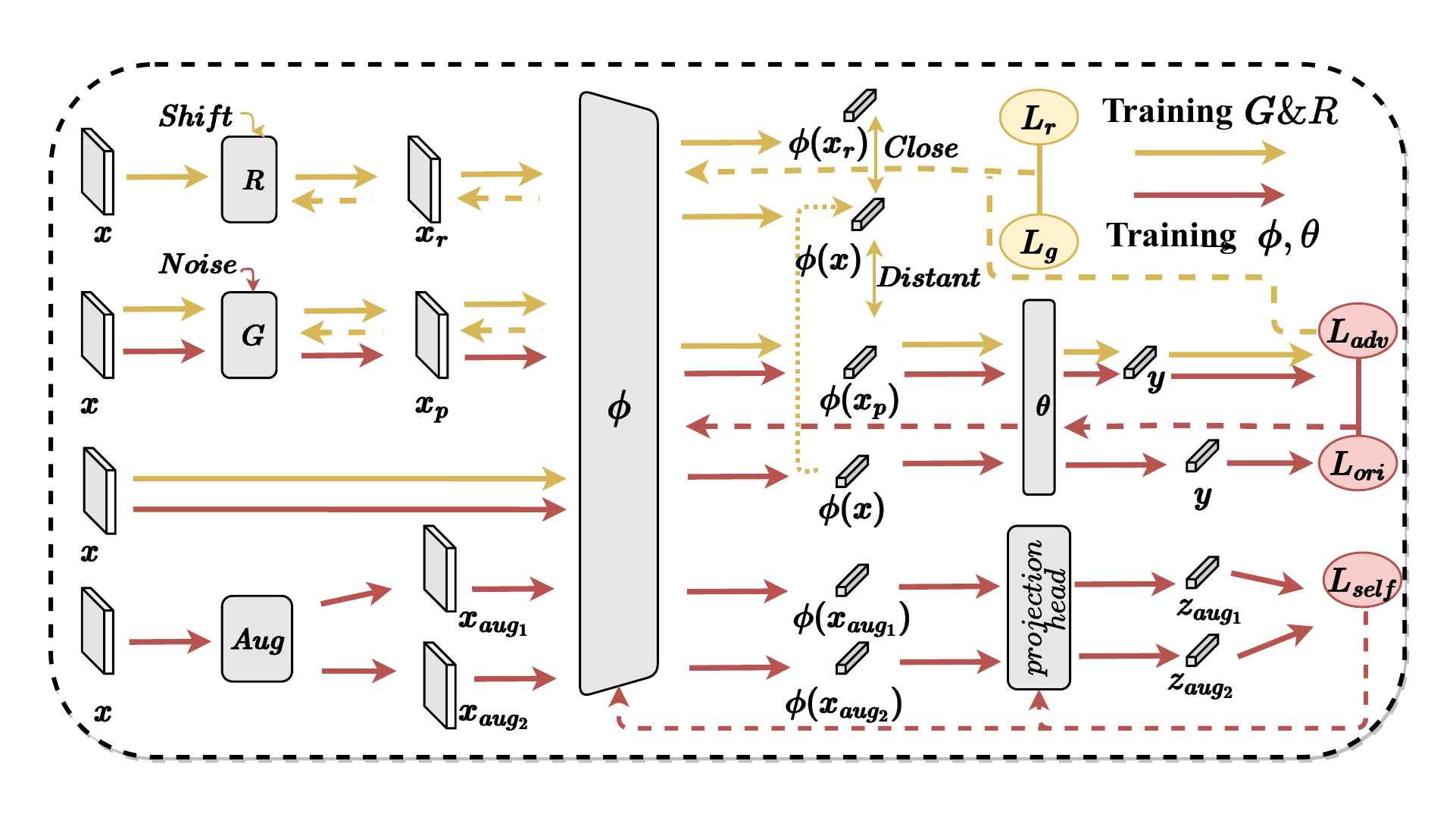}
  \vspace{-20pt}
  \caption{Architecture of \workname. The yellow and red line denotes the training $G \& R$ and $\phi \& \theta$ phase, respectively. The solid lines represent forward propagation, and dashed lines denote backward propagation of gradients. }
  \label{fig:representation_b}
\end{figure*}
\subsection{Semantic-Aware Data Generation} 
In this work, to alleviate inter-domain bias and intra-domain variance in images and derive a more robust embedding function, our target is to generate a set of augmented data since optimal transport may be misguided with low-quality embeddings and lead to incorrect predictions. 

First, to relieve the inter-domain bias from kinds of shifts in real applications, we  generate a perturbed data $x_r$ which is \textit{more} similar to the original data $x$. The  objective function is as follows. 
\begin{equation}\small
    \min_{\phi} \mathbb{E}_{\{x\}\sim \mathcal{D}}[\min_{x_{r}} M(\phi(x_{r}),  \phi(x))],
    \label{eq:adversarial_more}
\end{equation}
 where $M$ is the comparison module in few-shot learning, e.g., $l_2$-norm.  We force $x_r$ to be close to $x$ since we hope the repairer $R$ can learn the particular shift, which needs to produce a clean image to relieve the shifting degree in the inference phase. Hence, to efficiently obtain $x_r$, we use a dual semantic-aware shifting repairing generator to produce the $x_r$, i.e.,\footnote{It is worthy to note that $x_p$ in Eq.~(\ref{eq:adversarial}) is utilized as a challenging example to increase the reliability of our feature extractor.}
 
\begin{equation}\small
    x_r = R(\hat{x}), 
\end{equation}
where $\hat{x}$ is the original data with a particular shift. $R$ is a learnable model to relieve the shift with 3-layer convolutional neural network in our
practice.

Second, to relieve the  intra-domain variance from kinds of perturbations, we follow the framework~\cite{gong2021maxup} which generates augmented data by \textit{minimizing the maximum loss} over the augmented data $x_{p}$, which can be formally written as follows,
\begin{equation}\small
    \min_{\phi,\theta}  \mathbb{E}_{\{x,y\}\sim \mathcal{D}}[\max_{x_{p}} L(\theta(\phi(x_{p})),  y)].
    \label{eq:maxup}
\end{equation}
Note that Eq.~(\ref{eq:maxup}) can be easily minimized with stochastic gradient descent (SGD). 
In practice, we can sample a batch of augmented data $x_{p}$ and compute the gradient of the data point $x_{p}^{\ast}$ with the highest loss $L$. Therefore, the model would learn the hardest example overall augmented data $x_{p}$.

However, it's hard to collect sufficient labels for each class in a few-shot learning. Therefore, instead of maximizing the empirical risk of the labeled data by Eq. (\ref{eq:maxup}), we introduce a self-supervised learning based objective.
\begin{equation}\small
    \min_{\phi}  \mathbb{E}_{\{x\}\sim \mathcal{D}}[\max_{x_{p}} M(\phi(x_{p}),  \phi(x))],
    \label{eq:adversarial}
\end{equation}
Through maximizing the distance between  $\phi(x_{p})$ and  $\phi(x)$, we are able to generate perturbed the image $x_p$ that is \emph{less} similarly to the original image $x$ in embedding space as hard example in a few-shot learning setting. To effectively generate the perturbed data (shown in Fig.~\ref{fig:representation}), we introduce a semantic-aware perturbation generator to synthesize the augmented data.
\begin{equation}\small
    x_p = G(x) \text{ , s.t. } \Vert{x_p - x}\Vert_2^2 \leq \epsilon
\end{equation}

where $G$ is a model to generate the perturbed image $x_p$ with a same architecture of $R$. Besides, we utilize dropout~\cite{ian2014generative} to provide the necessary randomness of our model. Compared to conventional adversarial training technique~\cite{samangouei2018defense,wang2018low}, which usually sample the perturbed images from an i.i.d distribution $x_{p} \sim P(\cdot|x)$, e.g., Gaussian distribution, our method can encode the semantic of the image $x$ without requiring many samples to achieve convergence~\cite{gong2021maxup}.

\subsection{Self-Supervised Manner Adversarial Training }
At training phase (shown in Fig.~\ref{fig:representation_b}),  we also minimize the empirical risk of the perturbed data $x_p$ to ensure that the generator is able to persist enough information to predict the same class $y$ with KL divergence~\cite{phoo2020self}. The overall objective is,
\begin{equation}\small
   \max_G  M(\phi(G(x)),  \phi(x)) - KL(\theta(\phi(G(x))), y)). 
\label{eq:overall_1}
\end{equation}
Then, we adopt stochastic gradient descent (SGD)~\cite{bottou2012stochastic} to train our generator $G$. It is worth noting that we fix the parameters of the embedding function $\phi$ and $\theta$ when training the generator $G$ to stabilize the training process~\cite{antoniou2017data}.

To effectively train the embedding function, we minimize the empirical risk of the original data $x$ and the perturbed data point $x_p$ (i.e., hard example) via KL divergence.
\begin{equation*}\small
    L_{ori} =  KL(\theta(\phi(x)), y)),
\quad
    L_{adv} = KL(\theta(\phi(x_p)), y)) .
\end{equation*}
We also leverage the unlabeled data to enhance the generalbility of the emebddings by the auxiliary contrastive self-supervised learning~\cite{chen2020simple}. 

Summing up, the overall objective is,
\begin{equation}\small
  \min_{\phi,\theta} L_{ori} + \lambda_1 L_{adv} + \lambda_2 L_{self},
\label{eq:overall}   
\end{equation}
where $\lambda_1$ and $\lambda_2$ is the trade-off parameter between each loss; $L_{self}$ is the NT-Xent Loss~\cite{chen2020simple}. Similarly, the classifier $\theta$ is trained by minimizing $L_{ori}$ and $L_{adv}$. Detailed pseudo code is illustrated in Algorithm \ref{alg:prada}.

\subsection{Optimal Transport with Weighted Negative Entropy Regularization}
For further align the support query set better, a general optimal transport plan is necessary~\cite{cuturi2013sinkhorn}. We introduce a weighted negative entropy regularization which is described as follows:
\begin{align}\small
\label{eq:smooth_ot}
      \pi^{\ast} =   \argmin_{\pi} & \beta\sum_{\pi \in \Pi(\hat{\mu}_{s},\hat{\mu}_{q})} w(x_{s,i},x_{q,j}) \pi(x_{s,i},x_{q,j}) \notag \\ 
      & +(1-\beta)\sum_{i,j} \pi(i,j) \log \pi(i,j),
\end{align}
where $\beta$ is the weight parameter. Compared to Eq.~(\ref{eq:basic_ot}), the second term in Eq.~(\ref{eq:smooth_ot}) is the negative entropy regularization. The intuition of this regularization term is that we hope the transport plan should not be zero with a high probability. For example, if the transport plan is sparse, it will obtain a heavy penalty from Eq.~(\ref{eq:smooth_ot}), which can be regarded as a smooth version compared with classical optimal transport.  As a result, the optimal transport plan will have a more general and denser coupling between the distributions, leading to a better alignment in the support-query shift.

\begin{algorithm}[!ht]
    \caption{DuaL}
    \begin{algorithmic}[1]
    \REQUIRE  Dataset $\mathcal{D}$, comparison module $M$, learning rate $\eta$, trade-off parameters $\lambda_1$ and $\lambda_2$, arbitrary Shifting $S$.
    \ENSURE Embedding model $\phi$, repairer $R$, Optimal Plan $\pi^{*}$,
    \STATE Initialize generator $G$, repairer $R$, 
    \STATE Initialize embedding model $\phi$, classifier $\theta$.
    \FOR{$\{x,y\}$ in $\mathcal{D}$}
        
        \STATE \textit{\# fixed $\phi, \theta$, update $G$, and $R$} 
        \STATE $x_p$ = $G(x)$,
        \STATE $x_r$ = $R(S(x))$,
        \STATE $L_{g}$ = $-M(\phi(x_p),  \phi(x))$,\ $L_{r}$ = $M(\phi(x_r),  \phi(x))$, 
        \STATE $L_{adv}$ = $KL(\theta(\phi(x_p)),y))$
        \STATE \textit{\# Generated less similar data points with perturbations.}
        \STATE {$G \gets G - \eta \nabla(L_{g} + L_{adv})$ }, \ {$R \gets R - \eta \nabla L_{r}$ } 
        
        \STATE \textit{\# fixed $G$, $R$, update $\phi, \theta$}
        \STATE $x_p$ = $G(x)$,
        \STATE $L_{ori}$ = $KL(\theta(\phi(x)), y))$, 
        \ $L_{adv}$ = $KL(\theta(\phi(x_p)),y))$
        \STATE \textit{\# classifying the generated samples correctly.}    
        \STATE $\phi \gets \phi - \eta \nabla$ ($L_{ori} + \lambda_1 L_{adv} + \lambda_2 L_{self})$ 
        \STATE $\theta \gets \theta - \eta \nabla$ ($L_{ori} + \lambda_1 L_{adv} )$
    \ENDFOR
\STATE Solve the Eq.~\ref{eq:smooth_ot} to obtain $\pi^{*}$, and transport support sets.      
\end{algorithmic}
\label{alg:prada}
\end{algorithm}
\section{Analysis}

\subsection{Upper Bound of \workname}
Here, we provide the upper bound of \workname, which can be regarded as the formal presentation of the harm of shifts.

\begin{theorem}
\label{lemma:erm_variance} 
Let $\phi(x; \theta)$ be $\lambda$-strongly convex with respect to $\theta$ for all $x$, define the empirical risk minimizer under the augmented data via
\begin{equation}
\hat{\theta}=\argmin_{x\in\mathbb{R}^{d}}\frac{1}{n}\sum_{t=1}^{n}\phi(x; \theta).
\end{equation}

Then the empirical risk minimizer enjoys the guarantee
\begin{equation}
\label{eq:erm_parameter}
\mathbb{E} \lVert {\hat{\theta}-\theta^*} \rVert^{2}_2 \leq{} \frac{4\sigma_g^{2}}{\lambda^{2}n}.
\end{equation}
\end{theorem}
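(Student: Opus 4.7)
The plan is to exploit $\lambda$-strong convexity twice: first to translate the parameter error $\|\hat{\theta}-\theta^*\|$ into a gradient error at the population optimum, and then to bound that gradient error via an i.i.d. variance calculation over the $n$ augmented samples. Concretely, I would introduce the population risk $F(\theta) := \mathbb{E}_{x}[\phi(x;\theta)]$ and empirical risk $F_n(\theta) := \frac{1}{n}\sum_{t=1}^{n}\phi(x_t;\theta)$, with $\theta^* = \argmin_\theta F(\theta)$. Because strong convexity is preserved under averaging and expectation, both $F_n$ and $F$ inherit the $\lambda$-strong convexity modulus. In particular, $\nabla F(\theta^*) = 0$ and $\nabla F_n(\hat{\theta}) = 0$ by first-order optimality.

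The first key step is the co-coercivity inequality implied by strong convexity of $F_n$:
\begin{equation*}
\langle \nabla F_n(\hat{\theta}) - \nabla F_n(\theta^*),\ \hat{\theta} - \theta^*\rangle \geq \lambda \|\hat{\theta}-\theta^*\|^2.
\end{equation*}
Substituting $\nabla F_n(\hat{\theta}) = 0$ and applying Cauchy--Schwarz gives the pointwise bound
\begin{equation*}
\|\hat{\theta} - \theta^*\| \leq \tfrac{1}{\lambda}\,\|\nabla F_n(\theta^*)\|,
\end{equation*}
which reduces the theorem to controlling the second moment of $\nabla F_n(\theta^*)$.

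The second key step is to observe that $\nabla F_n(\theta^*) = \frac{1}{n}\sum_{t=1}^{n}\nabla\phi(x_t;\theta^*)$ is an empirical mean of i.i.d.\ random vectors with mean zero, since $\mathbb{E}[\nabla\phi(x_t;\theta^*)] = \nabla F(\theta^*) = 0$. Under the standing variance assumption $\mathbb{E}\|\nabla\phi(x;\theta^*)\|^2 \leq \sigma_g^2$, independence yields
\begin{equation*}
\mathbb{E}\|\nabla F_n(\theta^*)\|^2 \;=\; \tfrac{1}{n^2}\sum_{t=1}^{n}\mathbb{E}\|\nabla\phi(x_t;\theta^*)\|^2 \;\leq\; \tfrac{\sigma_g^2}{n}.
\end{equation*}
Squaring the pointwise inequality above and taking expectations yields $\mathbb{E}\|\hat{\theta}-\theta^*\|^2 \leq \sigma_g^2/(\lambda^2 n)$, and the extra factor of $4$ in~\eqref{eq:erm_parameter} is absorbed by using a slightly looser constant when splitting $\|\nabla F_n(\theta^*)\|^2$ into bias plus variance, or by invoking Jensen on $\mathbb{E}\|\cdot\|$ versus $(\mathbb{E}\|\cdot\|^2)^{1/2}$ before squaring.

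The main obstacle is not the convex-analytic chain itself (which is textbook), but justifying that the augmented data stream produced by the generator $G$ and repairer $R$ in Algorithm~\ref{alg:prada} really delivers i.i.d.\ gradient samples with variance bounded uniformly by $\sigma_g^2$. Since $G$ and $R$ are updated jointly with $\phi$ and $\theta$, the samples $x_t$ are not strictly independent of the current iterate; I would therefore either (i) condition on a frozen $(G,R)$ so that within an epoch the samples are i.i.d.\ draws from the augmented distribution, or (ii) invoke a standard mixing argument and treat the drift between consecutive $(G,R)$ configurations as a lower-order term that is absorbed into $\sigma_g^2$. Once this measurability/independence subtlety is resolved, the rest of the proof is essentially a direct application of strong convexity and a second-moment computation.
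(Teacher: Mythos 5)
Your proof follows essentially the same route as the paper: introduce the empirical risk $F_n(\theta)=\frac{1}{n}\sum_t\phi(x_t;\theta)$, use $\lambda$-strong convexity of $F_n$ together with first-order optimality of $\hat\theta$ to bound $\lVert\hat\theta-\theta^*\rVert$ by a multiple of $\lVert\nabla F_n(\theta^*)\rVert$, and then bound $\mathbb{E}\lVert\nabla F_n(\theta^*)\rVert^2\le\sigma_g^2/n$ as the variance of an i.i.d.\ average of mean-zero gradients. The only concrete difference is in the strong-convexity ingredient: you invoke strong monotonicity of the gradient (which you mislabel ``co-coercivity''), $\langle\nabla F_n(\hat\theta)-\nabla F_n(\theta^*),\hat\theta-\theta^*\rangle\ge\lambda\lVert\hat\theta-\theta^*\rVert^2$, which with $\nabla F_n(\hat\theta)=0$ and Cauchy--Schwarz gives $\lVert\hat\theta-\theta^*\rVert\le\tfrac{1}{\lambda}\lVert\nabla F_n(\theta^*)\rVert$, whereas the paper uses the first-order lower bound $F_n(\hat\theta)\ge F_n(\theta^*)+\langle\nabla F_n(\theta^*),\hat\theta-\theta^*\rangle+\tfrac{\lambda}{2}\lVert\hat\theta-\theta^*\rVert^2$ together with $F_n(\hat\theta)\le F_n(\theta^*)$, yielding the looser $\lVert\hat\theta-\theta^*\rVert\le\tfrac{2}{\lambda}\lVert\nabla F_n(\theta^*)\rVert$. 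That factor of $2$, once squared, is precisely the $4$ in the statement, so your chain already proves the strictly sharper $\mathbb{E}\lVert\hat\theta-\theta^*\rVert_2^2\le\sigma_g^2/(\lambda^2 n)$; the closing speculation about ``absorbing'' the $4$ via Jensen or a looser split is unnecessary hand-waving and should be replaced by the one-line observation that the sharper bound trivially implies the stated one. Your last paragraph raises a legitimate worry about whether the samples produced by the jointly-trained $G$ and $R$ are genuinely i.i.d., but the paper does not address this either -- it simply takes $\sigma_g^2$ to be the gradient variance under the augmented distribution -- so this is a shared assumption rather than a gap between your argument and theirs.
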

where $\sigma_g $ is  the variance of the gradient of the augmented loss function (i.e., the trace of the covariance matrix $\sigma_g = \mathbf{Var}_X[\nabla_\theta \phi(\theta, X) ] $), 
$\theta^{*}$ is the minimizer of the augmented population risk $\mathbb{E}\phi(\theta,X)$, 
$\hat{\theta}$ is the minimizer of the augmented empirical risk $\mathbb{E}\phi(\theta,X)$.
\begin{proof}
Let ${F}_n(\theta)=\frac{1}{n}\sum_{i=1}^{n}\phi(\theta;x_i)$ be the empirical objective. Since $\phi(\theta;x_i)$ is $\lambda$-strongly convex for each $x_i$, ${F}_n$ is itself $\lambda$-strongly convex, and so we have
\begin{equation}
\langle { \nabla \hat{F}_n(\theta^{\star}),\hat{\theta}-\theta^*} \rangle + \frac{\lambda}{2}\  \lVert \hat{\theta}-\theta^* \rVert^{2}_2 \leq{} \hat{F}_n(\hat{\theta}) - \hat{F}_n(\theta^{\star}).  \notag
\end{equation}
Since, $\hat{\theta}$ is the empirical risk minimizer, we have $\hat{F}(\hat{\theta})-\hat{F}(\theta^*)\leq 0 $. Therefore,
\begin{equation}\small
\frac{\lambda}{2} \lVert {\hat{\theta}- \theta^*} \rVert^{2}_2
\leq{} \langle \nabla \hat{F}_n(\theta^*), \hat{\theta} - \theta^* \rangle 
\leq{} \lVert \nabla \hat{F}_n(\theta^*) \rVert \lVert \hat{\theta}-\theta^* \rVert.  \notag
\end{equation} 
If $\hat{\theta} - \theta^* = 0$, the proof is done. Otherwise, we notice that,
\begin{equation}
\lVert \hat{\theta}-\theta^* \rVert  
\leq{} \frac{2}{\lambda}\lVert {\nabla \hat{F}_n(\theta^*)} \rVert.      \notag
\end{equation}
Since $\lVert \hat{\theta}-\theta^* \rVert \leq \frac{\sigma_g^2}{n}$, and both sides are squared and then take the expectation, which gives
\begin{equation}
\lVert \hat{\theta}-\theta^* \rVert _{2}^{2}
\leq{} \frac{4\sigma_g^2}{\lambda^2 n}. \notag
\end{equation}
\end{proof}

\subsection{Time Complexity}

Here, we consider the time complexity of  \workname in the training phase since most computation exists in this stage.

For a general convolutional neural network (CNN),  the total time complexity of all convolutional layers is:
\begin{equation}\label{eq:time}
T_{CNN} = 
O\left(\sum_{l=1}^{L}n_{l-1} \cdot   n_{l} \cdot s_{l}^2 \cdot m_{l}^2\right),
\end{equation}
where $L$ is the depth of convolutional layers. $n_{l}$ is the number of filters in the $l$-th layer and $n_{l-1}$ is also known as the number of input channels of the $l$-th layer. $s_{l}$ is the spatial size of the filters and $m_{l}$ is the spatial size of the output feature map. Note that the time complexity of forward and backward propagation varies on a constant level~\cite {he2015convolutional}.

In addition, for a general fully connected network (FCN), the total time complexity of the forward pass is:

 \begin{equation}\label{eq:time}
 T_{FCN} = 
 O\left(\sum_{l=1}^{L} z_{l-1} \cdot N_{l-1} \cdot N_{l}  \right),
\end{equation}
where $z_{l-1}$ is the size of input of $l$-layer (number of training examples), $N_{l}$ is the the number of neurons in $l$-th layer. 

In the first part of \workname, we fix the $\phi$ and train $G$ and $R$. Thus, in this phase, $\phi$ only has its inference computational overhead, including the time complexity of CNN and FCN.  The time complexity in the first part is: 

\begin{equation}\label{eq:time}
T_{1} = T_{CNN}^{R(I)} + T_{CNN}^{G(T)} + T_{CNN}^{\phi(I)} + T_{FCN}^{\phi(I)},
\end{equation}
where, $ T_{CNN}^{R(I)} $ and $T_{CNN}^{G(T)}$ denotes training phase time complexity of training $R$ and $G$, respectively. $T_{CNN}^{\phi(I)}$ and $T_{FCN}^{\phi(I)}$ denote the inference time complexity of CNN and FCN parts of $\phi$, respectively.

In the second part of \workname, to train $\phi$, we fix $G$ and $R$. Similarly, the time complexity in the second part is: 
\begin{equation}\label{eq:time}
T_{2} =  T_{CNN}^{G(I)} + T_{CNN}^{\phi(T)} + T_{FCN}^{\phi(T)},
\end{equation}
where $T_{CNN}^{G(I)}$ denotes the time complexity of inference generator network $G$. $T_{CNN}^{\phi(T)}$ and $T_{FCN}^{\phi(T)}$ denotes the time complexity of training the CNN and FCN of $\phi$, respectively. Thus, the final time complexity of \workname~is shown as follows,
\begin{equation}\label{eq:time}
T = T_{1} + T_{2}. 
\end{equation}

\section{Evaluation}
We build a benchmark of RSQS and evaluate 8 methods (\workname~and other 7 state-of-the-art baselines) on three public datasets including an experiments setup, main results, and ablation studies.

\subsection{Experiment Setup}
\label{sec:experiment_setup}

\subsubsection{Shifting Setting in RSQS}

Following \cite{hendrycks2019benchmarking}, we classified the  
shifts, also known as perturbations, into four main categories consisting of 15 instances in total: \textit{Noise}, \textit{Blur}, \textit{Weather}, and \textit{Digital}. In the training phase, we specifically utilize \textit{Gaussian Noise}, \textit{Defocus Blur}, \textit{Glass Blur}, \textit{Snow}, \textit{Frost}, \textit{Contrast}, and \textit{Elastic} shifts. The validation phase involves \textit{Shot Noise}, \textit{Motion Blur}, \textit{Fog}, and \textit{Pixel} shifts, while the testing phase incorporates \textit{Impulse Noise}, \textit{Zoom Blur}, \textit{Bright}, and \textit{JPEG} shifts. Note that the shifts are disjoint at the instance level throughout the training, validation, and testing stages. In each meta-task during the meta-testing phase, we randomly apply multiple shifts to each sample in the support and query sets. In this paper, we employ a maximum of 4 different shifts in each meta-task.

\subsubsection{Datasets}
Three standard benchmark datasets (CIFAR100, mini-Imagenet, and tiered-Imagenet) in few-shot learning are employed to validate our framework. 

\begin{itemize}
    \item \textbf{CIFAR100.}~\cite{krizhevsky2009learning} consists of $60,000$ three-channel square images of size $32 \times 32$, evenly distributed in $100$ classes. Classes are evenly distributed in $20$ superclasses.  We employ $19$ image transformations~\cite{zhang2020adaptive}, each one being applied with $5$ different levels of intensity, to evaluate the robustness of a model to support-query shift. The training dataset, validation dataset and testing dataset have $2,184,000$, $114,000$, $330,000$ images, respectively. For fair comparison, we adopt different transformations on each dataset to evaluate the robustness of our model.
    \item \textbf{mini-ImageNet}~\cite{triantafillou2019meta} contains $60,000$ square images with three channels of size $224 \times 224$ from the ImageNet dataset with a $64$-classes training set, a $16$-classes validation set, and a $20$-classes test set~\cite{vinyals2016matching}.  Similar to CIFAR100, mini-ImageNet also has the same transformations proposed by~\cite{hendrycks2019benchmarking} to simulate different domains~\cite{hendrycks2019benchmarking}. The training dataset, validation dataset and testing dataset have $1,200,000$, $182,000$, $228,000$ images, respectively.  
    \item \textbf{Tiered-ImageNet}~\cite{ren2018meta} contains of 779,165 three-channel $84 \times 84$ images, grouped  into 34 higher-level nodes in 608 classes.  The set of nodes is partitioned into 20, 6, and 8 disjoint sets of training, validation, and testing nodes, and the corresponding classes form the respective meta-sets.
\end{itemize}

\subsubsection{Baselines}
\label{sec:baseline}
We mainly have two types of baselines i.e., w/o and w/ adversarial data augmentation works. In the first type of baselines which have $7$ different state-of-the-art few-shot learning methods.

\begin{enumerate}
    \item \textbf{MatchingNet~\cite{vinyals2016matching}.} MatchingNet measures the pairwise cosine similarity between the support set and the query set and assigns the same class of the support example to the query example.
    \item \textbf{{ProtoNet~\cite{snell2017prototypical}.}} Instead of pairwise similarity, ProtoNet uses euclidean distance to classify queries to the prototype embeddings, i.e., averaging the embeddings of all support examples in the same class.
    \item \textbf{{TransPropNet~\cite{liu2018learning}.}} TransPropNet is an extension of ProtoNet, which utilizes graph neural network of label, leveraging information of local neighborhoods. 
    \item \textbf{{FTNET~\cite{dhillon2019baseline}.}} FTNET is a meta-learning framework that estimates the distribution between the training set and testing set transductively.
    \item \textbf{{TP~\cite{bennequin2021bridging}.}} Transported Prototypes (TP) combines the ProtoNet, optimal transport and transductive batch normalization to solve the support-query shift in few-shot learning. 
    \item \textbf{AQP~\cite{aimen2022adversarial}.} AQP aims to create more challenging virtual query sets by adversarially perturbing the query sets, inducing a distribution shift between support and query sets.  Note that AQP can be regarded as the SOTA method in support-query shift few-shot learning using episodic training.
    \item  \textbf{{PGADA~\cite{jiang2022pgada}}.} PGADA alleviates the misestimation of optimal transportation by learning hard examples that are generated by a self-supervised mechanism. Also, a smooth transportation plan is obtained with the help of negative entropy regularization. Note that PGADA can be regarded as the state-of-the-art (SOTA) method in support-query shift few-shot learning using non-episodic training.

\end{enumerate}

\subsubsection{Evaluation and Implementation Details}
\begin{table*}[ht]
\resizebox{2\columnwidth}{!}{
    \centering
    \begin{tabular}{c|c|c|c|c|c|c}
    \hline
        \multirow{2}{*}{Dataset}  & \multicolumn{2}{c|}{CIFAR-100} & \multicolumn{2}{c|}{mini-ImageNet}  & \multicolumn{2}{c}{Tiered-ImageNet}  \\ \cline{2-7} 
        & 1 shots & 5 shots & 1 shots & 5 shots & 1 shots & 5 shots \\ 
        \hline 
        \multicolumn{7}{c}{8 Targets} \\ 
        \hline
        MatchingNet~\cite{vinyals2016matching}. & $30.26_{\pm0.38}$ & $40.35_{\pm0.33}$ &  $43.62_{\pm0.47}$ & $55.03_{\pm0.44}$ & $30.01_{\pm0.41}$ & $39.75_{\pm0.41}$ \\ 
        ProtoNet~\cite{snell2017prototypical} & $28.53_{\pm0.30}$ & $42.13_{\pm0.34}$  & $43.84_{\pm0.44}$ & $59.18_{\pm0.37}$ & $30.15_{\pm0.41}$ & $43.41_{\pm0.43}$ \\ 
        TransPropNet~\cite{liu2018learning} &  $31.01_{\pm0.34}$ &  $38.67_{\pm0.32}$ & $24.22_{\pm0.29}$ & $25.83_{\pm0.25}$ & $24.18_{\pm0.32}$ & $27.48_{\pm0.35}$ \\ 
        FTNET~\cite{dhillon2019baseline} &  $22.36_{\pm0.21}$  & $26.19_{\pm0.25}$ & $37.04_{\pm0.44}$  & $49.07_{\pm0.43}$  & $22.01_{\pm0.30}$ & $24.73_{\pm0.32}$ \\ 
        TP~\cite{bennequin2021bridging}  & $30.89_{\pm0.42}$  &  $43.44_{\pm0.45}$ & $45.66_{\pm0.55}$ & $62.14_{\pm0.47}$ & $29.34_{\pm0.43}$ & $42.57_{\pm0.44}$ \\ 
        AQP~\cite{aimen2022adversarial}.  & $32.53_{\pm0.42}$ & $46.22_{\pm0.39}$  & $46.77_{\pm0.42}$ & $61.83_{\pm0.50}$ & $30.22_{\pm0.42}$ & $43.58_{\pm0.44}$ \\
        PGADA (Proto)~\cite{jiang2022pgada} & $34.90_{\pm0.45}$  & $49.42_{\pm0.46}$ & $50.37_{\pm0.57}$ & $65.25_{\pm0.46}$ & $28.47_{\pm0.40}$ & $36.45_{\pm0.37}$ \\ 
        PGADA (Matching)~\cite{jiang2022pgada} &  $35.16_{\pm0.46}$ & $44.82_{\pm0.43}$ & ${52.23_{\pm0.57}}$  & $61.05_{\pm0.46}$  & ${28.63_{\pm0.41}}$  &  $35.29_{\pm0.37}$    \\  \hline 
        \workname~(Proto) & ${38.93_{\pm0.50}}$ & $\mathbf{54.52_{\pm0.47}}$  & $53.00_{\pm0.60}$  & $\mathbf{68.21_{\pm0.47}}$  & $34.29_{\pm0.50}$  &  $\mathbf{47.28_{\pm0.48}}$  \\ 
        \workname~(Matching) & $\mathbf{39.35_{\pm0.51}}$ & $50.25_{\pm0.46}$ & $\mathbf{54.44_{\pm0.59}}$  & $63.43_{\pm0.47}$  & $\mathbf{35.29_{\pm0.37}}$  & $42.18_{\pm0.44}$  \\ 
        \hline
        \multicolumn{7}{c}{16 Targets} \\ 
        \hline
        MatchingNet~\cite{vinyals2016matching}. & $29.77_{\pm0.30}$ & $40.35_{\pm0.33}$ &  $44.12_{\pm0.43}$ & $56.24_{\pm0.37}$ & $29.83_{\pm0.36}$ & $43.05_{\pm0.36}$ \\ 
        ProtoNet~\cite{snell2017prototypical} & $29.42_{\pm0.38}$ & $41.59_{\pm0.41}$  & $43.29_{\pm0.47}$ & $59.83_{\pm0.42}$ & $29.75_{\pm0.34}$ & $45.48_{\pm0.37}$ \\ 
        TransPropNet~\cite{liu2018learning} &  $20.00_{\pm0.00}$ &  $37.06_{\pm0.40}$ & $21.94_{\pm0.25}$ & $25.93_{\pm0.29}$ & $20.95_{\pm0.18}$ & $45.48_{\pm0.37}$ \\ 
        FTNET~\cite{dhillon2019baseline} &  $22.36_{\pm0.21}$  & $26.19_{\pm0.25}$ & $37.22_{\pm0.37}$ & $49.14_{\pm0.36}$ & $21.83_{\pm0.22}$ & $24.50_{\pm0.23}$ \\ 
        TP~\cite{bennequin2021bridging}  & $31.88_{\pm0.38}$  &  $45.50_{\pm0.37}$ & $45.54_{\pm0.49}$ & $62.32_{\pm0.38}$ & $28.92_{\pm0.37}$ & $44.92_{\pm0.39}$ \\ 
        AQP~\cite{aimen2022adversarial}.  & $32.78_{\pm0.18}$ & $46.10_{\pm0.31}$  & $46.27_{\pm0.41}$ & $61.77_{\pm0.49}$ & $29.98_{\pm0.43}$ & $45.88_{\pm0.40}$ \\
        PGADA (Proto)~\cite{jiang2022pgada} & $36.04_{\pm0.41}$  & $49.45_{\pm0.38}$ & $53.14_{\pm0.53}$ & $64.37_{\pm0.39}$ & $28.14_{\pm0.31}$ & $35.73_{\pm0.34}$ \\  
        PGADA (Matching)~\cite{jiang2022pgada} & $36.17_{\pm0.41}$ & $44.70_{\pm0.36}$ & $54.36_{\pm0.53}$  & $61.09_{\pm0.40}$ & $28.24_{\pm0.31}$   & $34.97_{\pm0.32}$   \\   \hline
        \workname~(Proto) & $\mathbf{39.58_{\pm0.46}}$ & $\mathbf{53.97_{\pm0.40}}$  & $56.37_{\pm0.55}$  & $\mathbf{67.83_{\pm0.40}}$  & $34.46_{\pm0.45}$  &  $\mathbf{47.81_{\pm0.41}}$  \\ 
        \workname~(Matching) & $39.50_{\pm0.46}$ & $50.11_{\pm0.40}$ & $\mathbf{57.12_{\pm0.55}}$  & $64.04_{\pm0.42}$  & $\mathbf{34.70_{\pm0.46}}$  &  $42.96_{\pm0.38}$  \\ \hline

    \end{tabular}
}
    \caption{Comparison of accuracy among different baselines and datasets.}
    \label{table:main_result}
\end{table*}

Following~\cite{bennequin2021bridging}, the average results with $95\%$ confidence interval from $2000$ runs are reported in the top-$1$ accuracy score. In addition, we conduct the tasks of $1$-shot and $5$-shot with $16$-target, i.e., $1$ or $5$ instances per class in the support set and $16$ instances in the query set, in CIFAR100, mini-ImageNet, and Tiered-ImageNet. Same with~\cite{jiang2022pgada}, we use a $4$-layer convolutional network as an embedding function $\phi$ on CIFAR100, ResNet18 for mini-ImageNet and Tiered-ImageNet. As a general adversarial training framework for few-shot learning, we combine \workname~with two classifiers, i.e., ProtoNet and MatchingNet, in the testing phase. As for our repairer $R$, we adopted an adjusted REDNET-like\cite{Mao2016Image} structure which is composed of a $4$-layer encoder-decoder structure and $2$ convolutional layers. 

\begin{itemize}
    \item \textbf{Matching Network}~\cite{vinyals2016matching} classifies each example $x_{q,j}$ in the query set $\mathcal{Q}$ to the class of the data point in the support set $\mathcal{S}$, with the highest consine similarity. Given a support example $x^c_{s,i}$ for class $c$, the probability $p (y =c | x_{q,j}) $ is defined as follows.
    \begin{equation*}
    p (y =c | x_{q,j}) =  \frac{exp(cos(\phi(x_{q,j)},\phi(x^{c}_{s,i})))}{\sum_{\mathbf{c} \in \mathcal{C}} exp(cos(\phi(x^{\mathbf{c}}_{s,i}),\phi(x_{q,j)}))}.
    \end{equation*}
    Note that Matching Network is targeting on the one-shot learning. In other words, there is only one data point $x^c_{s,i}$ in the support set of class $c$.

    \item \textbf{Prototypical Network}~\cite{snell2017prototypical} extends the Matching Network by introducing the prototype as there are multiple examples for each class in the support set. Formally, the prototype $\phi^{c}(x_{s,i})$ is defined as the mean vector of the support set in class $c$, i.e.,
    \begin{align*}
        \phi^{c}(x_{s}) = \frac{1}{|\mathcal{S}^c|} \sum_{x_{s,i} \in \mathcal{S}^c} (\phi(x_{s,i})) 
    \end{align*}
    Then, The Euclidean distance is adopted to estimate the probability.
    \begin{align*}
        p (y =c | x_{q,j}) = \frac{exp(-\Vert{ \phi(x_{q,j}) - \phi^{c}(x_{s})}\Vert_2^{2})}{\sum_{\mathbf{c} \in C}exp(-\Vert{\phi(x_{q,j}) - \phi^{\mathbf{c}}(x_{s})}\Vert_2^{2}}) 
    \end{align*}
\end{itemize}

The learning rate $\eta$, batch size $b$, and embedding dimention $d$ are set to $1e-3$, $128$, $128$, respectively. Besides, SGD with Adam optimizer~\cite{kingma2014adam} is adopted to train model in $200$ epochs with early stopping. Grid search is adopted to select the trade-off parameter in the objective function, i.e.,  $\lambda_1 = 1$ and $\lambda_2 =1$, $\beta = 0.5$ for best performance. We deploy the self-supervised learning on unlabeled data from testing set. Note that we adopt the transductive batch normalization~\cite{liu2018learning} on TransPropNet, FTNET, TP, PGADA, and our framework.

\subsection{Benchmark of RSQS}
\label{sec:quantitative}

We compare \workname~with seven state-of-the-art few-shot learning methods mentioned above. Since  \workname~and PGADA are model-agnostic adversarial alignment frameworks, we equip these two methods with two different classifiers, e.g., ProtoNet and MatchingNet.

\subsubsection{Main Results}
\label{sec:exp:main_result}

The results presented in Table \ref{table:main_result} demonstrate that \workname~consistently surpasses the performance of the first four baseline models (MatchingNet, ProtoNet, TransPropNet, and FTNET), with maximum performance improvements of $14.17\%$, $13.28\%$, $42.00\%$, and $27.78\%$, respectively.
This is because these baselines are not equipped to address the inherent distribution shift between the support set and the query set, which is realigned by the two adversarially-trained models in \workname. Though TP also leverages optimal transport to align the support and query sets, it still shows relatively weak performance compared to \workname, with $4.71\%$ accuracy loss at least among three datasets. The main reason for this phenomenon is that the transportation plan of TP is misguided by the small perturbations in the images, as proved in Theorem~\ref{thm:err}. 
Interestingly, the results for 8 targets and 16 targets show significant similarities. From a dataset perspective, \workname~ consistently outperforms on the CIFAR-100, mini-ImageNet, and Tiered-ImageNet datasets, with maximum performance improvements of $28.33\%$, $42.38\%$, and $23.08\%$, respectively. This can be primarily attributed to the complex nature of the RSQS task, which poses significant challenges and negatively impacts the performance of previous methods.

Moreover, our framework outperforms PGADA by at most $12.08\%$ accuracy, manifesting that \workname~could align the distribution of the query images to one of the support images by reconstructing the lost information caused by the perturbations.  Improvements between \workname~and PGADA are more significant in Tiered-ImageNet datasets, showing that PGADA can only handle some easy datasets,  with a relatively weak performance in large inter-domain bias environments. In addition, the accuracy increase on Tiered-ImageNet datasets is smaller than other datasets since the Tiered-ImageNet has a larger image size which is less affected by perturbations between support and query sets, As a result of the above two phenomena, the improvement in CIFAR100 is the most significant.  
Furthermore, our observations indicate that the integration of ProtoNet with \workname~enhances its classification robustness, particularly in 5-shot scenarios, when compared to MatchingNet. This improvement is primarily attributed to the prototypes in ProtoNet, which reduce bias by averaging the embedding vectors, thereby strengthening robustness. In our experiments, we observed a more significant improvement in accuracy in the 5-shot scenarios as compared to the 1-shot scenarios. This indicates that the classification accuracy tends to improve with an increasing number of repaired images.

\subsubsection{Case Study}
\label{sec:exp:case}

\begin{table*}[!ht]\small

\resizebox{2\columnwidth}{!}{
    \centering
\begin{tabular}{ccccccc}
\hline
\multicolumn{1}{c}{\multirow{2}{*}{Dataset}} &
  \multicolumn{2}{|c|}{CIFAR-100} &
  \multicolumn{2}{c|}{mini-ImageNet} &
  \multicolumn{2}{c}{Tiered-ImageNet}  \\ \cline{2-7} 
 & \multicolumn{1}{|c|}{1-Shot} &
  \multicolumn{1}{c|}{5-Shot} &
  \multicolumn{1}{c|}{1-Shot} &
  \multicolumn{1}{c|}{5-Shot} &
  \multicolumn{1}{c|}{1-Shot} &
  \multicolumn{1}{c}{5-Shot}  \\ \hline
\multicolumn{7}{c}{1 Shift} \\ \hline
\multicolumn{1}{c|}{MatchingNet~\cite{vinyals2016matching}} &
  \multicolumn{1}{c|}{$32.51_{\pm0.34}$} &
  \multicolumn{1}{c|}{$43.55_{\pm0.33}$} &
  \multicolumn{1}{c|}{$47.31_{\pm0.45}$} &
  \multicolumn{1}{c|}{$56.96_{\pm0.37}$} &
  \multicolumn{1}{c|}{$34.92_{\pm0.40}$} &
  \multicolumn{1}{c}{$52.79_{\pm0.38}$} \\
\multicolumn{1}{c|}{ProtoNet~\cite{snell2017prototypical}} &
  \multicolumn{1}{c|}{$31.90_{\pm0.33}$} &
  \multicolumn{1}{c|}{$45.73_{\pm0.35}$} &
  \multicolumn{1}{c|}{$45.91_{\pm0.44}$} &
  \multicolumn{1}{c|}{$61.15_{\pm0.37}$} &
  \multicolumn{1}{c|}{$34.23_{\pm0.38}$} &
  \multicolumn{1}{c}{$56.35_{\pm0.40}$} \\
\multicolumn{1}{c|}{TransPropNet~\cite{liu2018learning}} &
  \multicolumn{1}{c|}{$33.12_{\pm0.38}$} &
  \multicolumn{1}{c|}{$42.23_{\pm0.34}$} &
  \multicolumn{1}{c|}{$23.75_{\pm0.24}$} &
  \multicolumn{1}{c|}{$38.70_{\pm0.39}$} &
  \multicolumn{1}{c|}{$23.80_{\pm0.22}$} &
  \multicolumn{1}{c}{$29.09_{\pm0.33}$} \\
\multicolumn{1}{c|}{FTNET~\cite{dhillon2019baseline}} &
  \multicolumn{1}{c|}{$23.39_{\pm0.24}$} &
  \multicolumn{1}{c|}{$28.52_{\pm0.26}$} &
  \multicolumn{1}{c|}{$25.33_{\pm0.22}$} &
  \multicolumn{1}{c|}{$35.21_{\pm0.59}$} &
  \multicolumn{1}{c|}{$23.86_{\pm0.11}$} &
  \multicolumn{1}{c}{$28.33_{\pm0.24}$} \\
\multicolumn{1}{c|}{TP~\cite{bennequin2021bridging}} &
  \multicolumn{1}{c|}{$35.59_{\pm0.40}$} &
  \multicolumn{1}{c|}{$50.20_{\pm0.38}$} &
  \multicolumn{1}{c|}{$50.71_{\pm0.52}$} &
  \multicolumn{1}{c|}{$64.05_{\pm0.40}$} &
  \multicolumn{1}{c|}{$35.16_{\pm0.43}$} &
  \multicolumn{1}{c}{$57.87_{\pm0.42}$} \\
\multicolumn{1}{c|}{AQP~\cite{aimen2022adversarial}} &
  \multicolumn{1}{c|}{$36.28_{\pm0.40}$} &
  \multicolumn{1}{c|}{$50.38_{\pm0.33}$} &
  \multicolumn{1}{c|}{$51.87_{\pm0.40}$} &
  \multicolumn{1}{c|}{$64.80_{\pm0.57}$} &
  \multicolumn{1}{c|}{$34.19_{\pm0.52}$} &
  \multicolumn{1}{c}{$55.63_{\pm0.50}$} \\
\multicolumn{1}{c|}{PGADA~\cite{jiang2022pgada} } &
  \multicolumn{1}{c|}{$41.87_{\pm0.45}$} &
  \multicolumn{1}{c|}{$56.33_{\pm0.39}$} &
  \multicolumn{1}{c|}{$57.89_{\pm0.53}$} &
  \multicolumn{1}{c|}{$69.57_{\pm0.39}$} &
  \multicolumn{1}{c|}{$41.87_{\pm0.45}$} &
  \multicolumn{1}{c}{$56.33_{\pm0.39}$}  \\
\multicolumn{1}{c|}{\workname} &
  \multicolumn{1}{c|}{$\mathbf{42.95_{\pm0.48}}$} &
  \multicolumn{1}{c|}{$\mathbf{59.42_{\pm0.40}}$} &
  \multicolumn{1}{c|}{$\mathbf{59.34_{\pm0.54}}$} &
  \multicolumn{1}{c|}{$\mathbf{71.22_{\pm0.39}}$} &
  \multicolumn{1}{c|}{$\mathbf{42.95_{\pm0.48}}$} &
  \multicolumn{1}{c}{$\mathbf{59.42_{\pm0.40}}$}  \\ \hline

\multicolumn{7}{c}{2 Shifts} \\ \hline
\multicolumn{1}{c|}{MatchingNet~\cite{vinyals2016matching}} &
  \multicolumn{1}{c|}{$30.35_{\pm0.31}$} &
  \multicolumn{1}{c|}{$40.34_{\pm0.32}$} &
  \multicolumn{1}{c|}{$41.78_{\pm0.41}$} &
  \multicolumn{1}{c|}{$57.28_{\pm0.38}$} &
  \multicolumn{1}{c|}{$29.47_{\pm0.35}$} &
  \multicolumn{1}{c}{$43.34_{\pm0.36}$} \\
\multicolumn{1}{c|}{ProtoNet~\cite{snell2017prototypical}} &
  \multicolumn{1}{c|}{$29.25_{\pm0.30}$} &
  \multicolumn{1}{c|}{$41.73_{\pm0.33}$} &
  \multicolumn{1}{c|}{$43.78_{\pm0.43}$} &
  \multicolumn{1}{c|}{$59.87_{\pm0.37}$} &
  \multicolumn{1}{c|}{$29.67_{\pm0.34}$} &
  \multicolumn{1}{c}{$46.32_{\pm0.37}$} \\
\multicolumn{1}{c|}{TransPropNet~\cite{liu2018learning}} &
  \multicolumn{1}{c|}{$30.11_{\pm0.34}$} &
  \multicolumn{1}{c|}{$35.43_{\pm0.32}$} &
  \multicolumn{1}{c|}{$22.85_{\pm0.65}$} &
  \multicolumn{1}{c|}{$30.21_{\pm0.53}$} &
  \multicolumn{1}{c|}{$21.37_{\pm0.21}$} &
  \multicolumn{1}{c}{$28.99_{\pm0.21}$} \\
\multicolumn{1}{c|}{FTNET~\cite{dhillon2019baseline}} &
  \multicolumn{1}{c|}{$22.14_{\pm0.21}$} &
  \multicolumn{1}{c|}{$25.43_{\pm0.24}$} &
  \multicolumn{1}{c|}{$24.47_{\pm0.28}$} &
  \multicolumn{1}{c|}{$32.66_{\pm0.37}$} &
  \multicolumn{1}{c|}{$21.39_{\pm0.21}$} &
  \multicolumn{1}{c}{$23.37_{\pm0.22}$} \\
\multicolumn{1}{c|}{TP~\cite{bennequin2021bridging}} &
  \multicolumn{1}{c|}{$31.83_{\pm0.35}$} &
  \multicolumn{1}{c|}{$44.76_{\pm0.36}$} &
  \multicolumn{1}{c|}{$47.78_{\pm0.51}$} &
  \multicolumn{1}{c|}{$61.34_{\pm0.40}$} &
  \multicolumn{1}{c|}{$28.76_{\pm0.37}$} &
  \multicolumn{1}{c}{$45.47_{\pm0.38}$} \\
\multicolumn{1}{c|}{AQP~\cite{aimen2022adversarial}} &
  \multicolumn{1}{c|}{$32.21_{\pm0.21}$} &
  \multicolumn{1}{c|}{$45.09_{\pm0.83}$} &
  \multicolumn{1}{c|}{$48.25_{\pm0.13}$} &
  \multicolumn{1}{c|}{$62.13_{\pm0.29}$} &
  \multicolumn{1}{c|}{$29.61_{\pm0.35}$} &
  \multicolumn{1}{c}{$46.64_{\pm0.33}$} \\
\multicolumn{1}{c|}{PGADA~\cite{jiang2022pgada} } &
  \multicolumn{1}{c|}{$36.51_{\pm0.41}$} &
  \multicolumn{1}{c|}{$49.57_{\pm0.38}$} &
  \multicolumn{1}{c|}{$52.35_{\pm0.52}$} &
  \multicolumn{1}{c|}{$65.72_{\pm0.39}$} &
  \multicolumn{1}{c|}{$36.51_{\pm0.41}$} &
  \multicolumn{1}{c}{$49.57_{\pm0.38}$}  \\
\multicolumn{1}{c|}{\workname} &
  \multicolumn{1}{c|}{$\mathbf{39.91_{\pm0.47}}$} &
  \multicolumn{1}{c|}{$\mathbf{52.69_{\pm0.39}}$} &
  \multicolumn{1}{c|}{$\mathbf{56.02_{\pm0.55}}$} &
  \multicolumn{1}{c|}{$\mathbf{68.74_{\pm0.40}}$} &
  \multicolumn{1}{c|}{$\mathbf{39.91_{\pm0.47}}$} &
  \multicolumn{1}{c}{$\mathbf{52.69_{\pm0.39}}$}  \\ \hline
  
\multicolumn{7}{c}{4 Shifts} \\ \hline
\multicolumn{1}{c|}{MatchingNet~\cite{vinyals2016matching}} &
  \multicolumn{1}{c|}{$26.76_{\pm0.27}$} &
  \multicolumn{1}{c|}{$35.72_{\pm0.30}$} &
  \multicolumn{1}{c|}{$35.10_{\pm0.42}$} &
  \multicolumn{1}{c|}{$50.91_{\pm0.42}$} &
  \multicolumn{1}{c|}{$24.60_{\pm0.31}$} &
  \multicolumn{1}{c}{$32.04_{\pm0.39}$} \\
\multicolumn{1}{c|}{ProtoNet~\cite{snell2017prototypical}} &
  \multicolumn{1}{c|}{$26.17_{\pm0.28}$} &
  \multicolumn{1}{c|}{$36.05_{\pm0.30}$} &
  \multicolumn{1}{c|}{$39.83_{\pm0.46}$} &
  \multicolumn{1}{c|}{$56.47_{\pm0.43}$} &
  \multicolumn{1}{c|}{$27.11_{\pm0.38}$} &
  \multicolumn{1}{c}{$32.17_{\pm0.40}$} \\
\multicolumn{1}{c|}{TransPropNet~\cite{liu2018learning}} &
  \multicolumn{1}{c|}{$25.08_{\pm0.26}$} &
  \multicolumn{1}{c|}{$29.39_{\pm0.27}$} &
  \multicolumn{1}{c|}{$20.94_{\pm0.18}$} &
  \multicolumn{1}{c|}{$22.22_{\pm0.21}$} &
  \multicolumn{1}{c|}{$21.24_{\pm0.26}$} &
  \multicolumn{1}{c}{$26.52_{\pm0.34}$} \\
\multicolumn{1}{c|}{FTNET~\cite{dhillon2019baseline}} &
  \multicolumn{1}{c|}{$21.28_{\pm0.19}$} &
  \multicolumn{1}{c|}{$22.82_{\pm0.21}$} &
  \multicolumn{1}{c|}{$22.82_{\pm0.24}$} &
  \multicolumn{1}{c|}{$21.11_{\pm0.26}$} &
  \multicolumn{1}{c|}{$23.21_{\pm0.20}$} &
  \multicolumn{1}{c}{$20.71_{\pm0.29}$} \\
\multicolumn{1}{c|}{TP~\cite{bennequin2021bridging}} &
  \multicolumn{1}{c|}{$27.30_{\pm0.31}$} &
  \multicolumn{1}{c|}{$36.73_{\pm0.32}$} &
  \multicolumn{1}{c|}{$31.65_{\pm0.46}$} &
  \multicolumn{1}{c|}{$58.11_{\pm0.47}$} &
  \multicolumn{1}{c|}{$20.23_{\pm0.39}$} &
  \multicolumn{1}{c}{$20.31_{\pm0.39}$} \\
\multicolumn{1}{c|}{AQP~\cite{aimen2022adversarial}} &
  \multicolumn{1}{c|}{$32.21_{\pm0.21}$} &
  \multicolumn{1}{c|}{$45.09_{\pm0.83}$} &
  \multicolumn{1}{c|}{$48.25_{\pm0.13}$} &
  \multicolumn{1}{c|}{$62.13_{\pm0.29}$} &
  \multicolumn{1}{c|}{$21.61_{\pm0.35}$} &
  \multicolumn{1}{c}{$20.41_{\pm0.31}$} \\
\multicolumn{1}{c|}{PGADA~\cite{jiang2022pgada}} &
  \multicolumn{1}{c|}{$28.85_{\pm0.32}$} &
  \multicolumn{1}{c|}{$37.10_{\pm0.33}$} &
  \multicolumn{1}{c|}{$45.26_{\pm0.48}$} &
  \multicolumn{1}{c|}{$57.31_{\pm0.39}$} &
  \multicolumn{1}{c|}{$20.43_{\pm0.26}$} & 
  \multicolumn{1}{c}{$23.10_{\pm0.33}$}  \\
\multicolumn{1}{c|}{\workname} &
  \multicolumn{1}{c|}{$\mathbf{35.54_{\pm0.44}}$} &
  \multicolumn{1}{c|}{$\mathbf{47.48_{\pm0.40}}$} &
  \multicolumn{1}{c|}{$\mathbf{51.33_{\pm0.55}}$} &
  \multicolumn{1}{c|}{$\mathbf{65.08_{\pm0.41}}$} &
  \multicolumn{1}{c|}{$\mathbf{27.73_{\pm0.42}}$} &
  \multicolumn{1}{c}{$\mathbf{32.38_{\pm0.41}}$}  \\
 \hline
\end{tabular}
}
\caption{The result of case studies under multiple shifts settings. Best results are marked as bold.}
\label{table:case_study}
\end{table*}

To further investigate the robustness and generality of the repairer in \workname, we conduct a case study of multiple shifts in this section. Compared to Sec.~\ref{sec:exp:main_result}, we study the effects of the number of shifts in this section.
In particular,  the instance in each meta-test task receives various perturbations, i.e., 1, 2, 4.  Note that we want we only present the 16 targets results in this paper since in the other two levels, e.g., 8 targets results are consistently similar. In addition,  we also only present the results of \workname~with ProtoNet as \workname~with MatchingNet shows similar ones. In Table \ref{table:case_study}, it shows that, under different perturbations, \workname~outperforms the first four baselines by $10.16\%$, $12.55\%$, $11.75\%$, $8.84\%$ for 1-shot setting and $11.48\%$, $16.58\%$, $10.71\%$, $10.41\%$ for 5-shot setting.  In addition, PGADA is degraded by the random shifts by $4.62\%$, $4.16\%$, and $5.98\%$ in the three datasets on average. 
According to these results, \workname~shows a robust performance under the inter-domain bias and intra-domain variance. Also, the repairer network is able to encode the distribution shifts by learning from repairing perturbed images and recovering crucial information that is damaged.

The \workname~method has shown notable improvements compared to PGADA across three datasets, boasting an average accuracy increase of $8.82\%$ for the 4-perturbation task. While improvements were also discernible in the 1 or 2 perturbation tasks, they were less substantial due to the adverse effects caused by the increased number of perturbations. Particularly for low-resolution images, such as those extracted from the CIFAR-100 dataset, the task involving multiple perturbations, i.e., 4 perturbations, led to the most significant decline in accuracy. Taking these observations into account, they highlight the effectiveness of the \workname~method in repairing lost information, even in those severe cases.  Furthermore, the \workname~method displayed the most significant improvement on the Tiered-ImageNet dataset, achieving a $10.64\%$ enhancement in accuracy across multiple perturbation choices. It also showed improvements of $5.27\%$ and $6.40\%$ on CIFAR100 and mini-ImageNet, respectively. These results suggest that support-query alignment can effectively denoise images. A comprehensive analysis of these findings will be presented in Section\ref{sec:ablation}. An interesting observation is that \workname~displays greater robustness than all the baselines in a 1-Shot situation. For instance, on CIFAR-100, when comparing a performance similar to \workname, i.e., PGADA, we observe that \workname~decreases from $42.95\%$ to $35.54\%$, while PGADA drops around $13.02\%$. This observation suggests that our repair network is effective in mitigating more noise, even in scenarios with limited instances.

\subsection{ Ablation Studies of \workname}
\label{sec:ablation}

\begin{table*}[!ht]\small

\resizebox{2\columnwidth}{!}{
    \centering
\begin{tabular}{ccccccc}
\hline
\multicolumn{1}{c}{\multirow{2}{*}{Dataset}} &
  \multicolumn{2}{|c|}{CIFAR-100} &
  \multicolumn{2}{c|}{mini-ImageNet} &
  \multicolumn{2}{c}{Tiered-ImageNet}  \\ \cline{2-7} 
 & \multicolumn{1}{|c|}{1-Shot} &
  \multicolumn{1}{c|}{5-Shot} &
  \multicolumn{1}{c|}{1-Shot} &
  \multicolumn{1}{c|}{5-Shot} &
  \multicolumn{1}{c|}{1-Shot} &
  \multicolumn{1}{c}{5-Shot}  \\ \hline
\multicolumn{7}{c}{ Ablation studies. } \\ \hline
\multicolumn{1}{c|}{$Fixed\ G$} &
  \multicolumn{1}{c|}{$37.56_{\pm0.45}$} &
  \multicolumn{1}{c|}{$52.01_{\pm0.39}$} &
  \multicolumn{1}{c|}{$54.22_{\pm0.57}$} &
  \multicolumn{1}{c|}{$66.27_{\pm0.40}$} &
  \multicolumn{1}{c|}{$31.19_{\pm0.41}$} &
  \multicolumn{1}{c}{$42.54_{\pm0.40}$} \\ 
\multicolumn{1}{c|}{w./o. $R$} &
  \multicolumn{1}{c|}{$27.47_{\pm0.36}$} &
  \multicolumn{1}{c|}{$35.05_{\pm0.39}$} &
  \multicolumn{1}{c|}{$44.12_{\pm0.43}$} &
  \multicolumn{1}{c|}{$62.33_{\pm0.38}$} &
  \multicolumn{1}{c|}{$26.73_{\pm0.26}$} &
  \multicolumn{1}{c}{$37.92_{\pm0.32}$}  \\
\multicolumn{1}{c|}{w./o. OT} &
  \multicolumn{1}{c|}{$36.04_{\pm0.37}$} &
  \multicolumn{1}{c|}{$50.76_{\pm0.38}$} &
  \multicolumn{1}{c|}{$46.46_{\pm0.45}$} &
  \multicolumn{1}{c|}{$65.61_{\pm0.39}$} &
  \multicolumn{1}{c|}{$29.03_{\pm0.33}$} &
  \multicolumn{1}{c}{$44.46_{\pm0.38}$} \\

\multicolumn{1}{c|}{w./o. SSL} &
  \multicolumn{1}{c|}{$37.58_{\pm0.44}$} &
  \multicolumn{1}{c|}{$51.02_{\pm0.39}$} &
  \multicolumn{1}{c|}{$51.45_{\pm0.51}$} &
  \multicolumn{1}{c|}{$63.96_{\pm0.40}$} &
  \multicolumn{1}{c|}{$35.24_{\pm0.47}$} &
  \multicolumn{1}{c}{$48.73_{\pm0.42}$} \\

  \hline

  \multicolumn{7}{c}{  Deep Analysis  } \\  \hline
  \multicolumn{1}{c|}{TP} &
  \multicolumn{1}{c|}{$31.88_{\pm0.38}$} &
  \multicolumn{1}{c|}{$45.50_{\pm0.37}$} &
  \multicolumn{1}{c|}{$45.54_{\pm0.49}$} &
  \multicolumn{1}{c|}{$62.32_{\pm0.38}$} &
  \multicolumn{1}{c|}{$28.92_{\pm0.37}$} &
  \multicolumn{1}{c}{$44.92_{\pm0.39}$} \\
  \multicolumn{1}{c|}{TP + R} &
  \multicolumn{1}{c|}{$32.03_{\pm0.36}$} &
  \multicolumn{1}{c|}{$46.13_{\pm0.40}$} &
  \multicolumn{1}{c|}{$48.58_{\pm0.53}$} &
  \multicolumn{1}{c|}{$64.25_{\pm0.40}$} &
  \multicolumn{1}{c|}{$28.52_{\pm0.39}$} &
  \multicolumn{1}{c}{$41.22_{\pm0.38}$} \\ 
  \multicolumn{1}{c|}{Encoding shift to $\phi$} &
  \multicolumn{1}{c|}{$34.56_{\pm0.38}$} &
  \multicolumn{1}{c|}{$45.98_{\pm0.38}$} &
  \multicolumn{1}{c|}{$49.37_{\pm0.50}$} &
  \multicolumn{1}{c|}{$62.55_{\pm0.39}$} &
  \multicolumn{1}{c|}{$24.26_{\pm0.26}$} &
  \multicolumn{1}{c}{$29.11_{\pm0.29}$} \\        
    \hline
 \multicolumn{1}{c|}{\workname}   &
  \multicolumn{1}{c|}{$\mathbf{39.58_{\pm0.46}}$}   & 
 \multicolumn{1}{c|}{$\mathbf{53.97_{\pm0.40}}$}  & 
 \multicolumn{1}{c|}{$\mathbf{56.37_{\pm0.55}}$}           & 
 \multicolumn{1}{c|}{$\mathbf{67.83_{\pm0.40}}$}  & 
 \multicolumn{1}{c|}{$\mathbf{40.16_{\pm0.51}}$}           &  
 \multicolumn{1}{c}{$\mathbf{54.28_{\pm0.43}}$}  \\ \hline
\end{tabular}
}
\caption{The results of ablation studies and deep Analysis of Repairer. Best results are marked as bold.}
\label{table:ablation}
\end{table*}

We conduct ablation studies to verify whether each component and designation in \workname~is contributing to the joint performance and the irreplaceability of the repairer in \workname~in Table \ref{table:ablation}. Similar to Sec.~\ref{sec:exp:case}, we only present the results of \workname~with ProtoNet as \workname~with 16 targets. Since the detailed analysis of the data augmentation methods is presented in~\cite{jiang2022pgada}, we only discuss the other essential components of \workname.

\subsubsection{Effect of Generator and Repairer.} 
We compare \workname~with its two variants, including i) \textit{fixed $G$}, fixing the parameters of the generator, and ii) \textit{w/o Repairer}, removing the repairer from the framework. Shown in Table~\ref{table:ablation}, when we fix the parameters of the generator, we can observe that the performance of \workname~drops by $2.23\%$, $2.15\%$,  and $11.74\%$ under the three datasets, respectively. It highlights how our trainable generator can extract the information from original images $x$ in order to produce meaningful perturbed images $x_p$. Furthermore, the \workname~method, when implemented without the repairer, displays a consistent decrease in performance: accuracy diminishes by $10.03\%$, $6.38\%$, and $9.08\%$. This outcome underscores the effectiveness of the repairer's adversarial-based alignment in extracting the inherent information from the original data using a robust embedding model. It effectively mitigates inter-domain bias and intra-domain variance, thereby facilitating improved alignment.

\subsubsection{Effect of Regularized Optimal Transportation.}
Here, we examine the effect of regularized optimal transportation (OT), a key component of few-shot learning under the support-query shift during the evaluation phase. Table~\ref{table:ablation} shows that OT significantly improves the model's capability by $3.36\%$, $3.75\%$, $4.31\%$ and $3.61\%$ averagely under the four shifts since it aligns the distributions of the support and query set. Even though the performance of \workname~drops as we remove OT, \workname~still outperforms the baseline method TP \cite{bennequin2021bridging} by $4.03\%$, which also utilizes OT but does not take into account the robustness of the image representation. 
This observation echoes the motivation of this work (Theorem~\ref{thm:err}), namely, that the small perturbations misguide the optimal transportation plan. This outcome manifests that a robust embedding model alleviates the support-query shift in few-shot learning, echoing our motivation, i.e., obtaining clean embeddings.

\subsubsection{Effect of Self-Supervised Learning.}
We also evaluated the impact of self-supervised learning. As Table~\ref{table:ablation} illustrates, \workname~experiences improvements of $2.48\%$, $4.39\%$, and $5.23\%$ on the three datasets, respectively. By incorporating contrastive loss, \workname~utilizes the structural information of the unlabeled data during the training phase to boost model performance. The results further demonstrate how \workname~and self-supervised learning can be combined to extract information from both the training and testing sets, thereby enhancing the generalizability of the embedding model. It's worth noting that in RSQS, the unlabeled data does not contribute to a significant improvement as compared to the repairer network.

\subsection{Deep Analysis of \workname}
In this section, we also conduct two deep analyses, i.e., repairer and hyperparameter analysis for a comprehensive discussion of \workname.

\subsubsection{Repairer Analysis.}
Furthermore, we delve into a detailed analysis of our repairer $R$, which aims to alleviate inter-domain bias by aligning the distribution. Firstly, we validate the necessity of the repairer by encoding the shifts into the feature extractor, specifically by training the model $\phi$ with perturbed images. This approach addresses potential questions regarding the differences between these two practices. As shown in Table~\ref{table:ablation}, the results indicate that performance improves by $2.68\%$ with the separate repairing function, suggesting that an independent model capacity may be a better option for denoising input images. Secondly, we attach the repairer to TP to evaluate its potential benefits on other frameworks. The performance improvement of the repairer on TP is minor, with only a $0.15\%$ increase on CIFAR-100, indicating a marginal advantage. However, when the repairer is integrated with \workname, the improvement is more significant at $1.83\%$. This evidence suggests that our repairer network is specifically designed for, and performs optimally within our framework. Lastly. with the designation of the dual adversarial distribution alignment, we observe that the repairer could be leveraged better, which means that the implicit alignment induced by the generator-encoding co-training and the explicit alignment brought by the repairer jointly achieve a more robust representation of images.

\subsubsection{Hyperparameter Analysis.}
We also conduct a hyperparameter analysis of \workname. In particular, we use various dimensions i.e., 32, 64, 128, of each layer in repair networks across three datasets. As illustrated in Figure~\ref{fig:hyper}, mini-ImageNet demonstrates the most exceptional performance, a phenomenon that can be attributed to using the highest resolution amongst its counterparts. Moreover, we observe a significant enhancement in performance across all three datasets with an increase in dimensionality. For instance, in the CIFAR100 dataset, the accuracy of the 5-shot task elevates from $44.11\%$ at a $32$-dimension level to $53.92\%$ when the dimensionality is increased to $128$. However, the performance gains are less significant when the dimension is increased from $64$ to $128$, particularly when compared to the increase from $32$ to $64$. The primary reason for this is that the repair network is capable of extracting sufficient information at a dimensionality of $64$.

\begin{figure}[t]
     \centering
     \begin{subfigure}[b]{.9\linewidth}
         \centering
         \includegraphics[width=\linewidth]{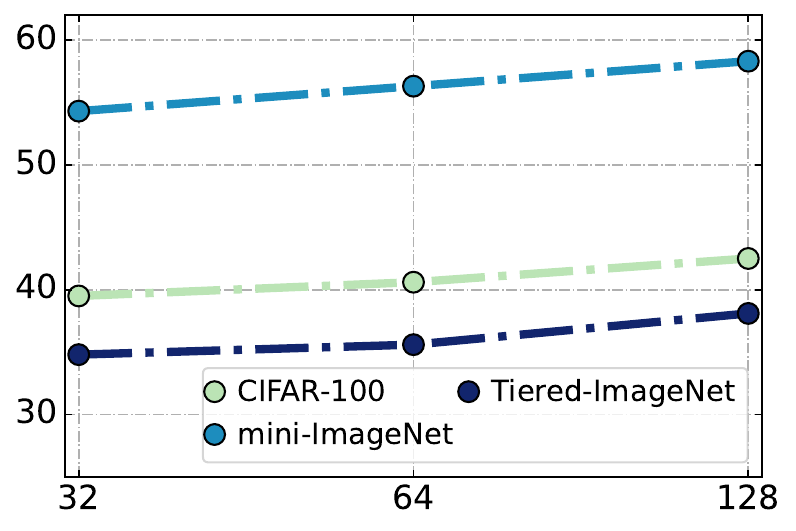}
         \vspace{-5pt}
         \caption{ Accuracy Comparison in 1-Shot}
         \label{fig:y equals x}
     \end{subfigure}
     \begin{subfigure}[b]{.9\linewidth}
         \centering
         \includegraphics[width=\linewidth]{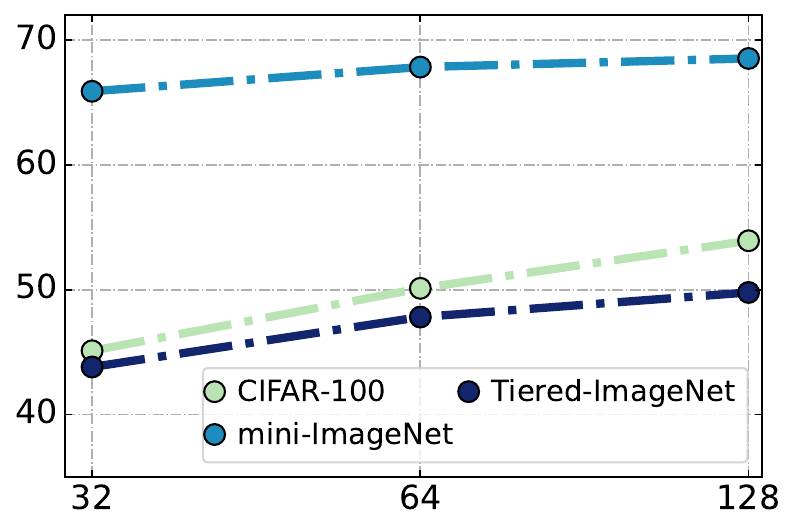}
         \caption{ Accuracy Comparison in 5-Shot}
         \label{fig:three sin x}
     \end{subfigure}
     \vspace{-5pt}
    \caption{Hyperparameter Analysis among 3 datasets.}
     \vspace{-10pt}

  \label{fig:hyper}
\end{figure}

\section{Related Work}
\label{sec:related}
We present our related work from the following perspectives including (1) few-shot learning, (2) support-query shift few-shot learning, and (3) data augmentation and optimal transport.
\subsection{Few-shot Learning} 
The study of few-shot Learning mainly can be divided into three categories: hallucination based, optimization based, and metric based methods~\cite{luo2021few,parnami2022learning}.
The hallucination based methods synthesize representations~\cite{hariharan2017low,luo2021few,wang2018low} and use Generative Adversarial Networks (GANs)~\cite{antoniou2017data,li2020adversarial} to generate the deficient labeled data.  The optimization based methods address the few-shot problem by training an adaptable model~\cite{finn2017model,yao2019hierarchically}. The purpose of such methods is to obtain a well-trained initial model which only needs a few optimizations to adopt few-shot tasks. However, optimization based methods are considered that are challenged to learn effective embeddings and being easier infected with oversampling of meta-training classes~\cite{rusu2018meta}. Recent research~\cite{Lee_2022_WACV} scaled the gradient norms of the backbone in a task-wise fashion, improved the task-specific knowledge learning in MAML\cite{finn2017model}
In contrast with optimization based methods, the metric based methods aim to learn a classifier to assess the similarity between examples~\cite{vinyals2016matching,snell2017prototypical,huang2021pseudo}, which is most relevant to our work. MatchingNet~\cite{vinyals2016matching} and ProtoNet~\cite{snell2017prototypical} respectively used pairwise metrics and class-wise metrics to select the closest label of the query set samples according to the support set, and the RelationNet~\cite{zhuang2018relationnet} trying to learn a deep distance metric directly. Sung et al.~\cite{sung2018learning} model the non-linear relation between class representations and queries by neural networks. 
The latest research in this category~\cite{Afrasiyabi_2022_CVPR} extracts sets of feature vectors rather than single feature for each image and uses a set-to-set matching metric as the distance metric.

\subsection{Support-Query Shift Few-shot Learning}
Support query shift is a nascent area of few-shot learning research, and several studies have been performed well to address this type of problem. 
Bennequin et al.~\cite{bennequin2021bridging} utilized optimal transport to align the query set and support set to tackle this issue. However, that the perturbation in the pictures would mislead the best transport plan and result in an undesirable performance.
To solve this problem, PGADA~\cite{jiang2022pgada} combined regularized optimal transportation and adversarial generator to produce challenging examples for self-supervised learning to solve this problem.
In addition, Aimen et al.~\cite{aimen2022adversarial} also sought to generate challenging examples for the meta-training of a more robust model, but it used the projection method rather than an adversarial generator. A similar study line is cross-domain few-shot learning~\cite{chen2019closer,phoo2020self,li2022cross}, which also considers the distribution shift in few-shot learning. However, it is different to the support-query shift since the domain shift happened between the meta-training and meta-testing phases instead of the support and query sets. 

\subsection{Data Augmentation and Optimal Transport}
Since the dominate challenging of support query few-shot learning is to overcome the distribution shift between the support set and query set, there are mainly two types of methods in this study line, i.e., data augmentation and optimal transport~\cite{jiang2022pgada}.
On one hand, data augmentation is a classical method in machine learning to enrich the dataset\cite{khalifa2021comprehensive}, including classical data augmentation and adversarial based augmentation.
The classical data augmentation ~\cite{zhang2017mixup,yun2019cutmix} usually transforms a single image to its perturbations, some research like~\cite{chen2020simple,xie2020self} further estimating the pairwise similarity of augmented data with self-training schemes.
In contrast, adversarial based augmentation~\cite{samangouei2018defense,theagarajan2019shieldnets} generates new data points that are excluded from the original dataset. Those methods are able to generate perturbed examples from clean data that the model misclassifies as new training data to make the model more robust. On the other hand, the optimal transport is widely used to in machine learning studies such as domain adaptation because it is easy to align different distributions, using optimal transport~\cite{courty2016optimal,wang2021zero}. For example, Wang et al.~\cite{wang2021zero} utilised optimal transport between model-generated feature and real features for zero-shot recognition, Nguyen et al.~\cite{nguyen2021most} tried to solve multi-source domain adaptation problem by optimal transport and imitation learning.
In particular, a regularized unsupervised optimal transportation method is proposed~\cite{courty2016optimal} to align the representations between the source and target domains. In order to reduce the computation complexity, Sinkhorn's iterative algorithm and its variants~\cite{cuturi2013sinkhorn} are used to efficiently solve the optimal transport, which applies iterative Bregman projection.

\section{Conclusion}
In this paper, we study a more practical and complex situation of support-query shifts, RSQS, from inter-domain bias and intra-domain variance. To relieve such shifts, we propose \emph{\textbf{Du}al Adversarial \textbf{Al}ignment Framework (DuaL)} to solve the support-query shift in few-shot learning. Our key idea is to generate perturbed images in both support and query sets and then train on these perturbed data to derive a more robust embedding model and alleviate the misestimation of optimal transportation. In addition, a negative entropy regularization is introduced to obtain a smooth transportation plan. A benchmark is also conducted of RSQS, experiment results manifest that \workname~significantly outperforms relatively 7 baselines in CIFAR100, mini-ImageNet, and Tiered-ImageNet. Future works include applying \workname~to other computer vision tasks and incorporating it with other data augmentation schemes.

\bibliographystyle{IEEEtran}
\bibliography{myref}

\begin{IEEEbiography}[{\includegraphics[width=1.1in,height=1.25in,clip,keepaspectratio]{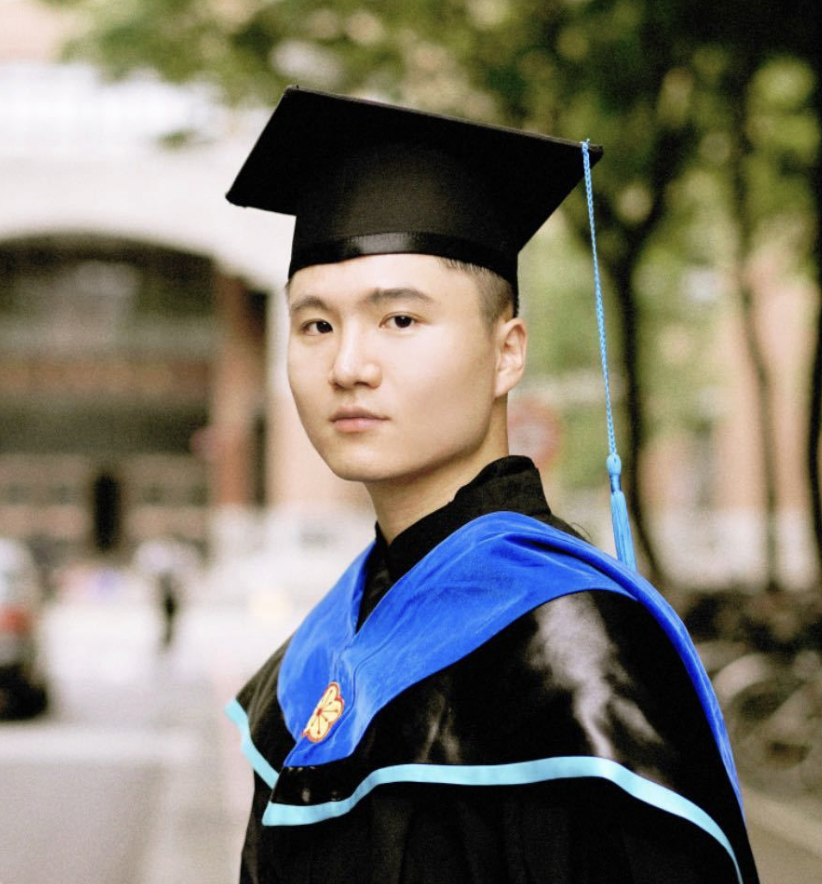}}]{Siyang Jiang} worked in Huizhou University (HZU). He received his master's degree from the National Taiwan University (NTU), Taipei, Taiwan, in 2021. His current research interests include deep learning, machine learning, few-shot learning, and meta-learning. He was the recipient of the Best Student Paper Award at PAKDD 2022.
\end{IEEEbiography}

\begin{IEEEbiography}
[{\includegraphics[width=1.25in,height=1.25in,clip,keepaspectratio]{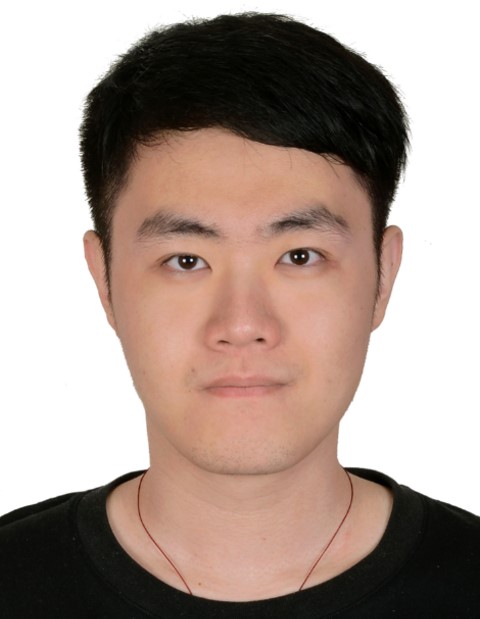}}]
{Rui Fang} is currently working toward his PhD degree at the National Taiwan University (NTU), Taipei, Taiwan. He received his master degree from the National Taipei University of Technology (NTUT), Taipei. in 2021. His current research interests include deep learning, few-shot learning, and deep learning acceleration.
\end{IEEEbiography}

\begin{IEEEbiography}[{\includegraphics[width=1.25in,height=1.25in,clip,keepaspectratio]{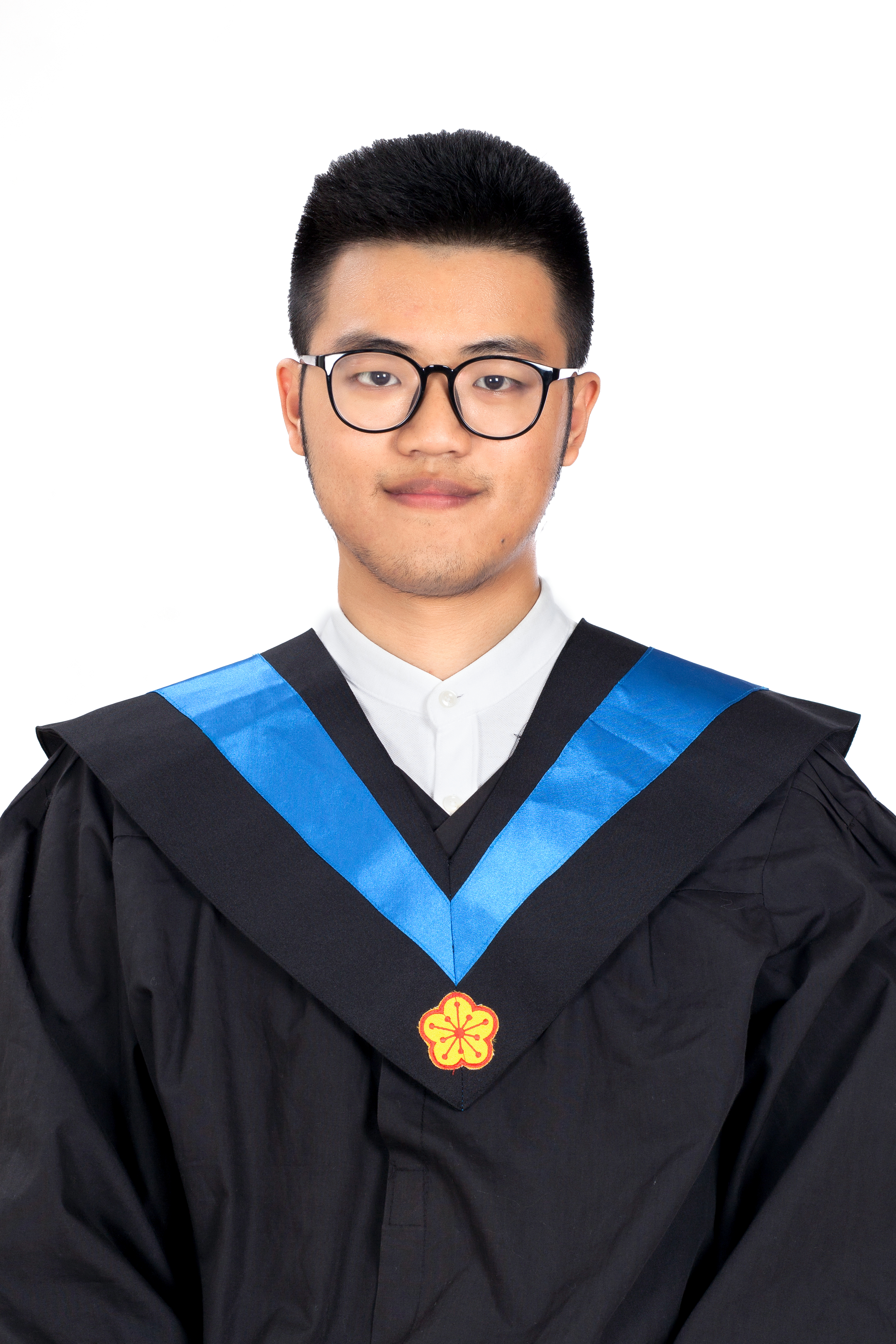}}]{Hsi-Wen Chen} currently working toward the PhD
degree at the National Taiwan University (NTU), Taipei. He received the B.S. degree and M.S. degree both from NTU. His research field includes deep learning, data mining, reinforcement learning, and few-shot learning. He was the recipient of the National Outstanding Doctor Scholarship in NTU.
\end{IEEEbiography}

\begin{IEEEbiography}
[{\includegraphics[width=1.25in,height=1.25in,clip,keepaspectratio]{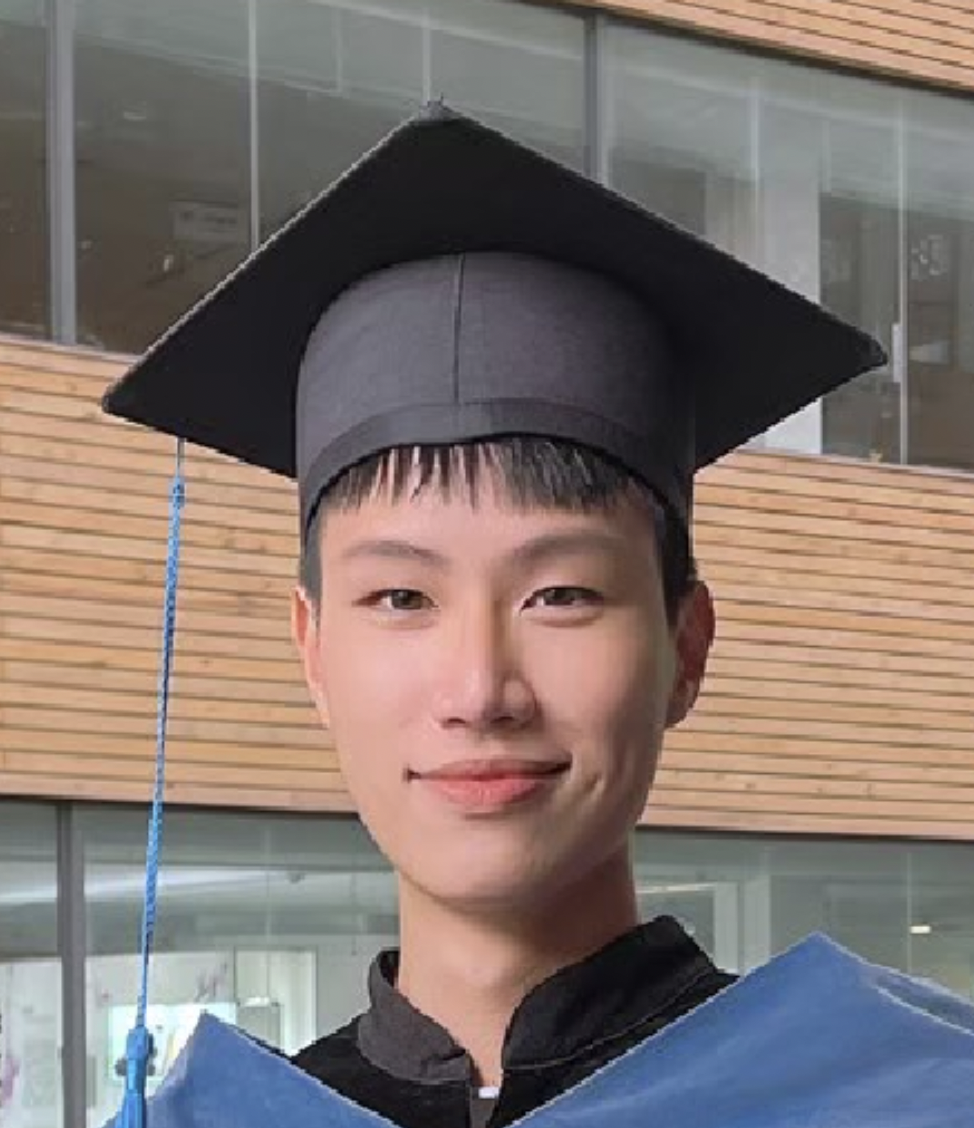}}]{Wei Ding} received the B.S. degree in computer science from National Tsing Hua University, Hsinchu, and the M.S. degree in electrical engineering from National Taiwan University, Taipei. His research field includes deep learning, reinforcement learning, and few-shot learning.
He was the recipient of the Best Student Paper Award at PAKDD 2022.
\end{IEEEbiography}

\begin{IEEEbiography}[{\includegraphics[width=1.25in,height=1.25in,clip,keepaspectratio]{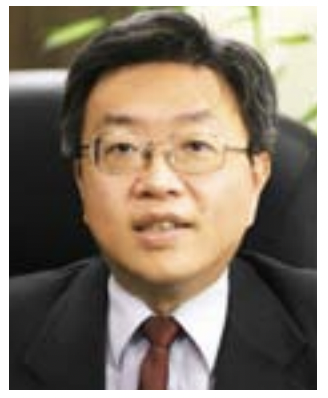}}]{Ming-Syan Chen} (Fellow, IEEE/ACM) received the B.S. degree in electrical engineering from National Taiwan University, Taipei, and the M.S. and Ph.D. degrees in computer, information, and control engineering from the University of Michigan at Ann Arbor, Ann Arbor, in 1985 and 1988, respectively. He was a Research Staff Member with the IBM Thomas J. Watson Research Center, Yorktown Heights, NY, USA, the President/CEO of Institute for Information Industry (III), and the Director of the Research Center of Information Technology Innovation (CITI), Academia Sinica. He is currently a Distinguished Professor with the EE Department, National Taiwan University. His research interests include databases, data mining, cloud computing, and multimedia networking. 
He was a recipient of the Academic Award of the Ministry of Education, the National Science Council (NSC) Distinguished Research Award for his research work, and the Outstanding Innovation Award from IBM Corporate for his contribution to a major database product.
\end{IEEEbiography}

\end{document}